\def\eqref#1{equation~\ref{#1}}
\def\1{\bm{1}}
\DeclareMathAlphabet{\mathsfit}{\encodingdefault}{\sfdefault}{m}{sl}
\SetMathAlphabet{\mathsfit}{bold}{\encodingdefault}{\sfdefault}{bx}{n}
\renewcommand{\sectionautorefname}{\S\kern-4pt}
\renewcommand{\subsectionautorefname}{\S\kern-4pt}
\useunder{\uline}{\ul}{}
\newcommand*{\concat}{|\mskip-3mu|}
\newcommand*{\bigconcat}{\Big|\mskip-3mu\Big|}
\newcommand*{\biggconcat}{\bigg|\mskip-3mu\bigg|}
\newcommand*{\ldblbrace}{\{\mskip-5mu\{}
\newcommand*{\rdblbrace}{\}\mskip-5mu\}}
\newcommand*{\bigldblbrace}{\bigg\{\mskip-10mu\bigg\{}
\newcommand*{\bigrdblbrace}{\bigg\}\mskip-10mu\bigg\}}
\newtheorem{lemma}{Lemma}
\newtheorem{theorem}{Theorem}
\newtheorem{remark}{Remark}
\newtheorem{corollary}{Corollary}
\newtheorem{proposition}{Proposition}
\algrenewcommand\algorithmicrequire{\textbf{Input:}}
\algrenewcommand\algorithmicensure{\textbf{Output:}}
\algnewcommand{\LineComment}[1]{\State \(\triangleright\) #1}
\definecolor{plum}{RGB}{221,160,221}
\newcommandx{\Replace}[1]{\textcolor[HTML]{FF0000}{#1}}
\newcommandx{\Previous}[1]{\textcolor[HTML]{880000}{#1}}
\newcommandx{\Addition}[1]{\textcolor[HTML]{008800}{#1}}
\def\useNotes{0}
\def\cameraReady{1}
    \newcommandx{\VGn}[2][1=]{\todo[linecolor=blue,backgroundcolor=blue!25,bordercolor=blue,#1]{#2}}
    \newcommandx{\NAn}[2][1=]{\todo[linecolor=red,backgroundcolor=red!25,bordercolor=red,#1]{#2}}
    \newcommandx{\AKn}[2][1=]{\todo[linecolor=green,backgroundcolor=green!25,bordercolor=green,#1]{#2}}
    \newcommandx{\DNn}[2][1=]{\todo[linecolor=gray,backgroundcolor=gray!25,bordercolor=gray,#1]{OK: #2}}
    \newcommandx{\VGnOk}[2][1=]{\todo[linecolor=blue,backgroundcolor=blue!10,bordercolor=blue,#1]{#2}}
    \newcommandx{\NAnOk}[2][1=]{\todo[linecolor=red,backgroundcolor=red!10,bordercolor=red,#1]{#2}}
    \newcommandx{\AKnOk}[2][1=]{\todo[linecolor=green,backgroundcolor=green!10,bordercolor=green,#1]{#2}}
    \newcommandx{\VGn}[2][1=]{}
    \newcommandx{\NAn}[2][1=]{}
    \newcommandx{\AKn}[2][1=]{}
    \newcommandx{\DNn}[2][1=]{}
    \newcommandx{\VGnOk}[2][1=]{}
    \newcommandx{\NAnOk}[2][1=]{}
    \newcommandx{\AKnOk}[2][1=]{}
\title{Improving Subgraph-GNNs via Edge-Level\\Ego-Network Encodings}
\author{\name Nurudin Alvarez-Gonzalez \email nuralgon@gmail.com \\
      Universitat Pompeu Fabra
      \AND
      \name Andreas Kaltenbrunner \email kaltenbrunner@gmail.com \\
      \addr Universitat Oberta de Catalunya \\
      \addr ISI Foundation Turin
      \AND
      \name Vicen\c{c} G\'omez \email vicen.gomez@upf.edu\\
      Universitat Pompeu Fabra}
\begin{document}

\maketitle

\begin{abstract}
We present a novel edge-level ego-network encoding for learning on graphs 
that can boost Message Passing Graph Neural Networks (MP-GNNs) by providing additional node and edge features or extending message-passing formats. The proposed encoding is sufficient to distinguish Strongly Regular Graphs, a family of challenging 3-WL equivalent graphs. We show theoretically that such encoding is more expressive than node-based sub-graph MP-GNNs. In an empirical evaluation on four benchmarks with 10 graph datasets, our results match or improve previous baselines on expressivity, graph classification, graph regression, and proximity tasks---while reducing memory usage by 18.1x in certain real-world settings. 
\end{abstract}\section{Introduction}
\label{sec:EleneIntroduction}
Neural graph architectures are the current standard for learning from graph data.
These methods automatically learn representations of nodes, edges, or graphs in a data-driven and end-to-end way. 
Message Passing Graph Neural Networks (MP-GNNs) are the most common model for learning on graphs.
MP-GNNs process the input graph (data) with a computational graph (model) to learn useful representations through message passing between direct neighbors in the input graph. This framework facilitates the theoretical analysis of MP-GNNs (e.g., in terms of their expressive power~\citep{xu2018how}, or characterizing issues such as \emph{over-squashing}~\citep{alon2021on} or \emph{over-smoothing}~\citep{oono2019graph}).

The idea of decoupling the input graph from the computational graph is the basis of leading-edge learning approaches, such as Sub-graph GNNs~\citep{zhao2022,abboud2022spnn,frasca2022understanding,mitton2023subgraph},
perturbation methods~\citep{papp2021dropgnn,dwivedi2022graph}, or Graph Transformers~\citep{Yun2019,ying2021do,rampasek2022GPS,kim2022pure}. These methods extend the message-passing mechanism to more general structures induced by the graph, beyond direct neighbors---for example between the nearest neighbours of a node at a given depth (ego-networks). This flexibility extends the expressive power of MP-GNNs, 
at the cost of an increased computational footprint and a departure of the inductive bias contained in the input graph, which must be learned again.

In this work, we present an alternative to previous \emph{pure learning} approaches. Rather than learning on sub-graphs, we introduce a systematic procedure to generate a pool of structural features (an encoding) which can subsequently be integrated into MP-GNNs.
Similar approaches have been proposed recently~\citep{bouritsas2023,alvarez-gonzalez2022beyond}. 
Crucially, our proposed features capture information at the edge-level, including signals contained in the two ego-networks of adjacent nodes in the input graph. We call this encoding \textsc{Elene}, for \textbf{E}dge-\textbf{L}evel \textbf{E}go-\textbf{N}etwork \textbf{E}ncodings. The benefits of such a representation are diverse: The encodings are interpretable and amenable for theoretical analysis, they are efficiently computable as a pre-processing step, and finally, they reach comparable performance with state-of-the-art learning methods.

As an illustrative example, consider
\textbf{S}trongly \textbf{R}egular \textbf{G}raphs (\texttt{SRG}s). 
They are known to be \emph{indistinguishable} by node-based sub-graph GNNs~\citep{balcilar2021breaking,morris2023,pmlr-v162-papp22a,zhao2022,frasca2022understanding}, 
 as exemplified by the non-isomorphic $4\times 4$ Rook and Shrikhande graphs  in~\autoref{fig:3WLGraphs}.
We theoretically show that \textsc{Elene} is as expressive as node-only sub-graph GNNs, and expressive enough to differentiate certain classes of \texttt{SRG}s like those in~\autoref{fig:3WLGraphs}.

\begin{figure}[!htt]
  \centering
  \begin{subfigure}[b]{0.325\linewidth}
    \centering
    \includegraphics[width=\linewidth]{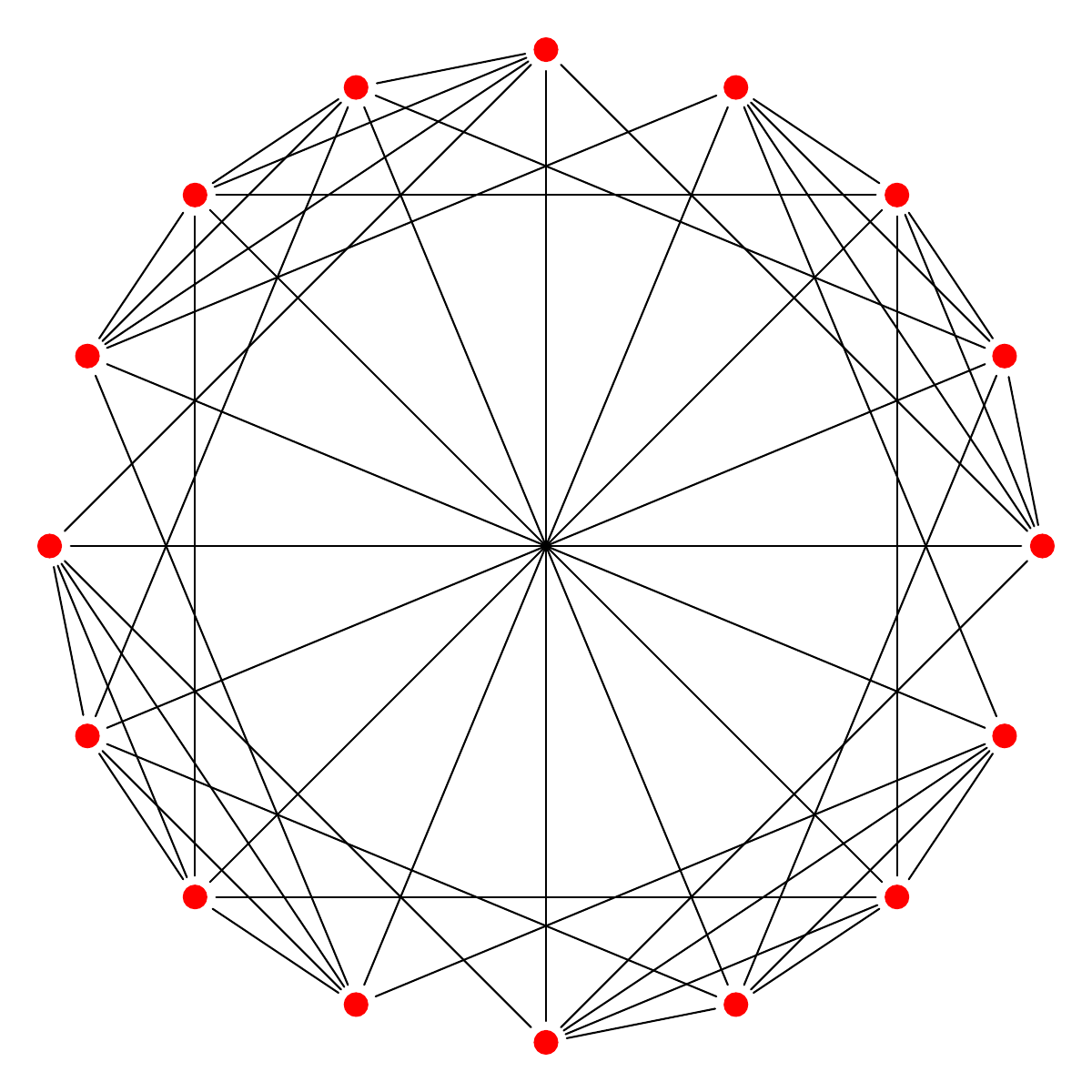}
    \caption[a]{$4\times4$ Rook Graph.}
  \end{subfigure}
  \hspace{0.1\linewidth}
  \begin{subfigure}[b]{0.325\linewidth}
    \centering
    \includegraphics[width=\linewidth]{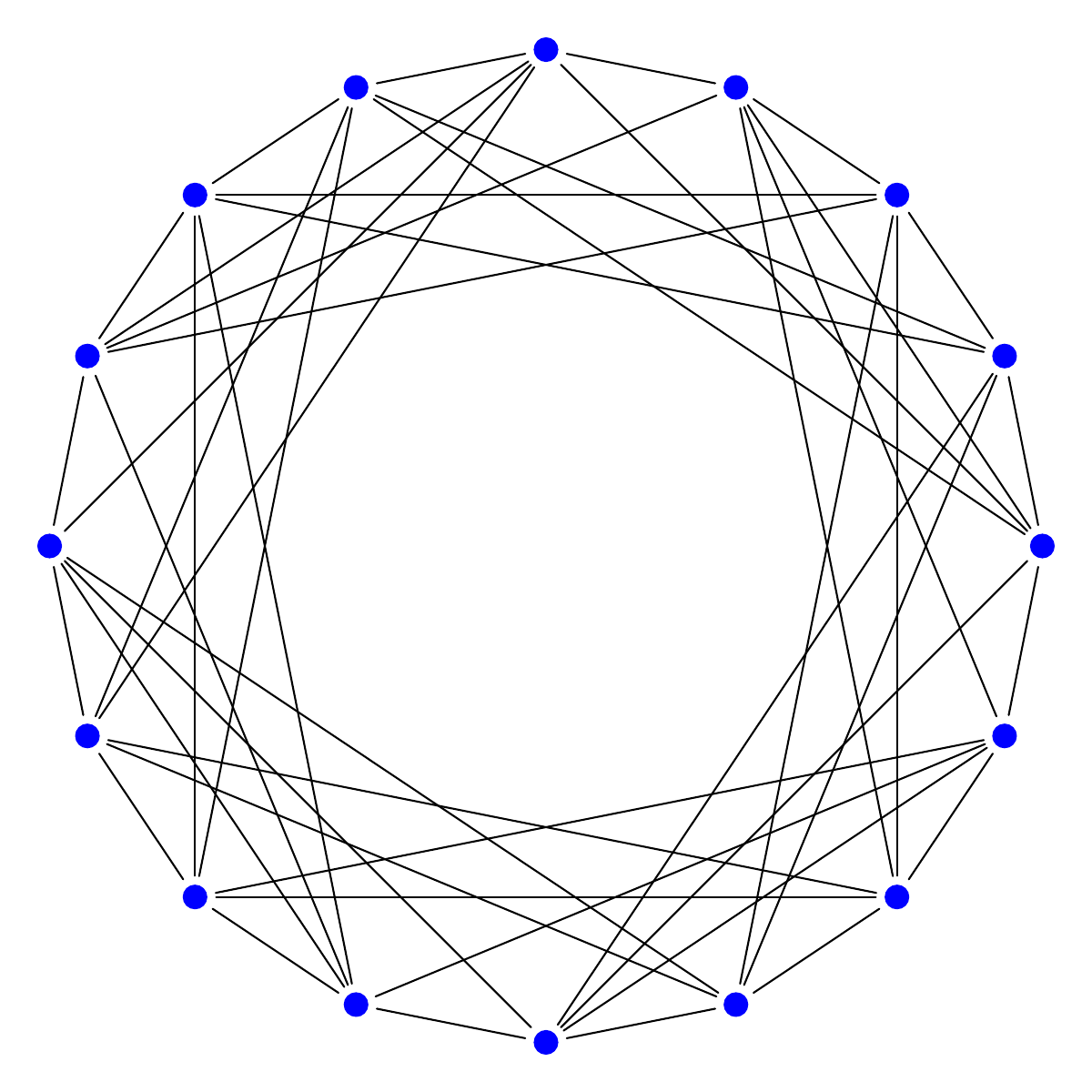}
    \caption[b]{Shrikhande Graph.}
  \end{subfigure}
  \caption{Expressive power is typically analyzed in terms of the families of non-isomorphic graphs \emph{that models fail to distinguish}: $4\times 4$ Rook (a) and Shrikhande (b) graphs are indistinguishable by node-only sub-graph GNNs~\citep{frasca2022understanding}.} 
  \label{fig:3WLGraphs}
\end{figure}
Another example of a challenging benchmark is the $h$-Proximity task (shown in~\autoref{fig:hProximity}), which requires the ability to capture graph properties that depend both on the graph structure (shortest path distances) and node attributes (colors)~\citep{abboud2022spnn}.
In this case, an enriched (learnable) MP-GNN with \textsc{Elene} features---called \textsc{Elene-L}---outperforms current baselines. In real-world benchmarks, \textsc{Elene-L} matches the performance of state-of-the-art learning methods at significantly lower memory costs, as we show experimentally in \autoref{sec:EleneExperiments}.

The paper is organized as follows. \autoref{sec:EleneDef} defines and motivates \textsc{Elene}. \autoref{sec:EleneLearning} introduces \textsc{Elene-L}. \autoref{sec:EleneRelWork} describes related work and~\autoref{sec:EleneExpressivity} analyzes expressivity. Finally, \autoref{sec:EleneExperiments} evaluates our methods in four benchmarks and \autoref{sec:EleneConclusions} summarizes our results.
\begin{figure}[!ht]
  \centering
  \begin{subfigure}[b]{0.35\linewidth}
    \centering
    \includegraphics[width=.8\linewidth]{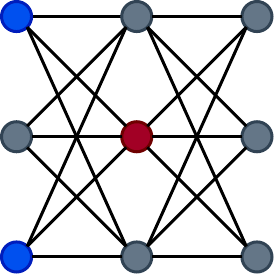}
    \caption[a]{Positive graph.}
  \end{subfigure}
  \hspace{0.1\linewidth}
  \begin{subfigure}[b]{0.35\linewidth}
    \centering
    \includegraphics[width=.8\linewidth]{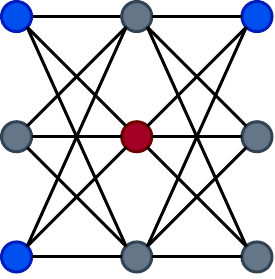}
    \caption[b]{Negative graph.}
  \end{subfigure}
  \caption{$h$-Proximity binary classification task---A pair of positive (a) and negative (b) 1-Proximity graph examples. An $h$-Proximity graph is positive if all red nodes have at most 2 blue neighbors up to distance $h$, and negative otherwise.} 
  \label{fig:hProximity}
\end{figure}

\section{Notation and Definitions}
\label{sec:EleneNotation}

In this work, $G = (V, E)$ denotes a graph with $n = |V|$ and $m = |E|$. $l_G(u, v)$ is the shortest path length between $u, v \in V$ in $G$. $d_{G}(v)$ is the degree of $v$ in $G$ and we use $d_{\texttt{max}}$ for the maximum degree over all nodes in $G$. Double brackets $\ldblbrace\cdot\rdblbrace$ denote multi-sets while $\bigcup$ and $\bigcap$, respectively, indicate set and multi-set union and intersection. We use short-hand $x^r$ notation to signify $x$ is contained $r$ times, where $y = \ldblbrace x^r \rdblbrace$ reads as ``$x$ appears $r$ times in $y$''. 

We use $\mathcal{S}^{k}_v = (\mathcal{V}^{k}_v, \mathcal{E}^{k}_v) \subseteq G$ for the $k$-depth induced ego-network sub-graph of $G$ centered on $v$ (abbreviated $\mathcal{S}$ in equations). 
We denote the maximum degree over all nodes in $\mathcal{S}^{k}_v$ by $d^k_{\texttt{max}}$.
Likewise, we use $\mathcal{S}^{k}_{\langle u, v\rangle} = (\mathcal{V}^{k}_u \bigcap \mathcal{V}^{k}_v, \mathcal{E}^{k}_u \bigcap \mathcal{E}^{k}_v)$ to denote the intersection of ego-networks across edge $\langle u, v \rangle$. Feature vectors are shown in \textbf{bold}, as $\mathbf{x}_v$ for node $v$, $\mathbf{x}_{\langle u, v \rangle}$ for edge $\langle u, v \rangle$, we denote vector concatenation by $\concat$, and the Hadamard product by $\odot$. Finally, we represent a learnable embedding of a discrete input, e.g., degree or distance signals as $\texttt{Emb}(\cdot)$, and a learnable weight matrix as $\textbf{W}$.

\section{Defining ELENE}\label{sec:EleneDef}

In this section, we first present the proposed edge-level encodings and then illustrate their expressive power.

\subsection{Constructing an Edge-Level Ego-Network Encoding}
The main idea behind \textsc{Elene} encodings is to capture higher-order interactions that go beyond the node-centric perspective used by MP-GNNs.
We look at the structure resulting not only from the ego-network of every node, but also from the combination of two ego-networks of adjacent nodes in the input graph, and design a pool of features based on that structure.

Consider the $k$-depth ($k > 1$) ego-network $\mathcal{S}^{k}_v$ surrounding node $v$. We may ask: \emph{how many edges of a neighbor $u$ of $v$ reach nodes that are 1-hop closer to $v$, at the same distance as $u$, or 1-hop farther from $v$?} The proposed \textsc{Elene} encodings elaborate on this idea to capture interactions between nodes and edges in ego-network sub-graphs.

More formally, consider a node $u$ contained in $\mathcal{S}^{k}_v$ and 
let $d_{\mathcal{S}}(u|v)$ count the edges from $u$ to nodes at a distance $l_\mathcal{S}(u,v)+p$ of $v$, $p 
\in\{-1, 0, +1\}$:
\begin{align*}
    d_{\mathcal{S}}^{(p)}&(u|v) = \Big|(u, w) \in \mathcal{E}^{k}_v,\ \forall w \in \mathcal{V}^{k}_v : l_{\mathcal{S}}(v, w) = l_{\mathcal{S}}(u, v) + p\Big|.
\end{align*}

The degree of node $u$ decomposes as the sum of these \emph{relative} degrees corresponding to these three different subsets of neighbors of $u$:
\begin{align*}
d_{\mathcal{S}}(u) = d_{\mathcal{S}}^{(\texttt{-}1)}(u|v) + d_{\mathcal{S}}^{(0)}(u|v) + d_{\mathcal{S}}^{(\texttt{+}1)}(u|v).
\end{align*}

\autoref{fig:ELENEDegrees} (left) shows an example graph, with all nodes labeled with their degree and colored according to the distance to the root node of the ego network, in this case, \textcolor{green!50!black}{the node in green}.
The plot on the right shows a degree triplet for each node, which counts the \emph{relative} degrees, or edges closer and further to the \textcolor{green!50!black}{root} ($1^{\text{st}}$ and $3^{\text{rd}}$ components), together with the individual degree ($2^{\text{nd}}$ component).

\begin{figure}[!t]
    \centering
    \includegraphics[width=0.8\linewidth]{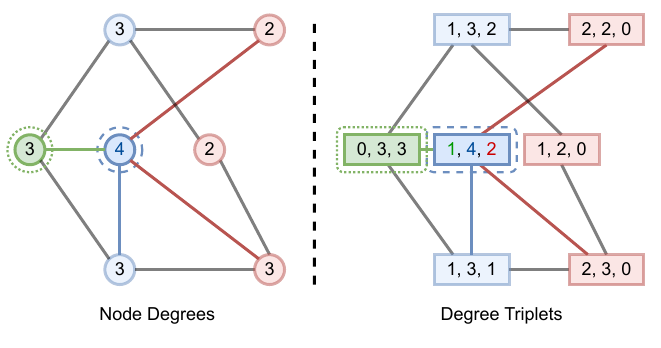}
    \caption{Example graph (right) and corresponding (left) degree triplets for nodes in the 2-hop ego-network rooted on \textcolor{green!50!black}{the green node}. 
    The \textcolor{blue!75!black}{dashed blue node} has one edge to the \textcolor{green!50!black}{0-hop root ($d^{(\texttt{-}1)}_{\mathcal{S}} = 1$)}, a \textcolor{blue!75!black}{degree of 4}, and two edges \textcolor{red!80!black}{2-hops from the root ($d^{(\texttt{+}1)}_{\mathcal{S}} = 2$, red)}, so its degree triplet is (\textcolor{green!50!black}{1}, \textcolor{blue!75!black}{4}, \textcolor{red!80!black}{2}).} 
    \label{fig:ELENEDegrees}
\end{figure}
Leveraging relative degrees yields \textsc{Elene}, an ego-network encoding as a multi-set of quadruplets counting all instances of distance and degree triplets in sub-graph $\mathcal{S}$:
\begin{align}\label{eq:ELENEEncRaw}
    \mskip-10mu e^{k}_v = \bigldblbrace\mskip-5mu\Big(l_{\mathcal{S}}(u, v), d^{(\texttt{-}1)}_{\mathcal{S}}(u|v), d_{\mathcal{S}}(u), d^{(\texttt{+}1)}_{\mathcal{S}}(u|v)\Big) 
        \Big|\forall u\mskip-5mu\in\mskip-5mu\mathcal{V}^{k}_v\bigrdblbrace.
\end{align} 
We can construct an Edge (\textbf{ED}) Centric encoding analogous to the Node (\textbf{ND}) Centric encoding of~\autoref{eq:ELENEEncRaw} by also encoding edge-wise sub-graph intersections for edge $\langle u, v \rangle$ as $e^{k}_{\langle u, v \rangle}$ and counting quadruplets across $\mathcal{S}^{k}_{\langle u, v\rangle}$ with distances to both $u$ and $v$.
Using \textbf{ED} or \textbf{ND} encodings leads to different expressive power, as we show formally in~\autoref{sec:EleneExpressivity}. In both cases, \autoref{sec:ELENEAppPermInvariant} shows that \textsc{Elene} encodings are permutation invariant at the node level and equivariant at the graph level.

\subsection{Illustrating ELENE}\label{subsec:EleneIntuition}


To illustrate \textsc{Elene}, we focus on  Strongly Regular graphs. An $n$-vertex graph is $d$-regular if all $n$ nodes have degree $d$, i.e., $\forall\ v \in V, d_{G}(v) = d$. An $n$-vertex $d$-regular graph is said to be Strongly Regular if there exists $\lambda, \mu \in \mathbb{N}$ such that every two adjacent nodes have $\lambda$ neighbors in common, and every two non-adjacent nodes have $\mu$ neighbors in common. We denote Strongly Regular Graphs as $\texttt{SRG}(n, d, \lambda, \mu)$. Strongly Regular graphs with equal parameters are \emph{indistinguishable} by the 1-WL~\citep{weisfeilerlemanl1968wl} test---a classic graph algorithm known to distinguish graphs that are not isomorphic with high probability~\citep{babai1979}---and its more powerful $k=3$-WL variant~\citep{arvind2020,balcilar2021breaking}---whose ability to distinguish graphs has been shown to be the expressivity upper-bound for node-only Sub-graph GNNs~\citep{bevilacqua2022equivariant,frasca2022understanding,zhao2022}. A natural question follows: \emph{what structural information is sufficient to distinguish \texttt{SRG}s?}

\begin{figure}[!ht]
  \centering
  \begin{subfigure}[b]{0.4\linewidth}
    \centering
    \includegraphics[width=\linewidth]{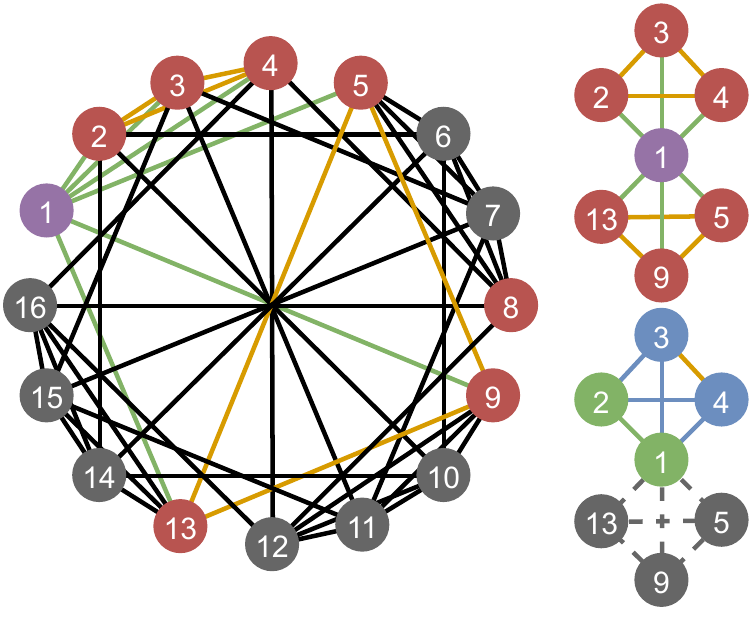}
    \caption[a]{$4\times4$ Rook Graph.}
  \end{subfigure}
  \begin{subfigure}[b]{0.4\linewidth}
    \centering
    \includegraphics[width=\linewidth]{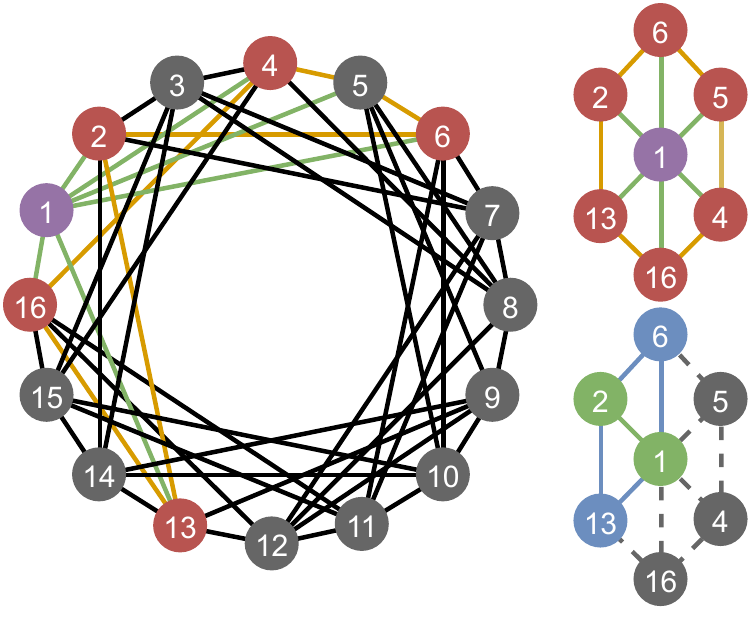}
    \caption[b]{Shrikhande Graph.}
  \end{subfigure}
  \caption[$4\times 4$ Rook and Shrikhande graphs, encoded by \textsc{Elene} (\textbf{ND}) and (\textbf{ED}).]{The $4\times 4$ Rook (a) and Shrikhande (b) graphs are indistinguishable by 3-WL as \texttt{SRG}s with parameters $\texttt{SRG}(16, 6, 2, 2)$~\citep{arvind2020,frasca2022understanding}. \textsc{Elene} (\textbf{ND}, top sub-graphs) is also unable to distinguish the graphs, while \textsc{Elene} (\textbf{ED}, bottom sub-graphs) counts different numbers of edges.} 
  \label{fig:3WLGraphsEleneEdgeDistinguishable}
\end{figure}

In~\autoref{fig:3WLGraphsEleneEdgeDistinguishable}, we show the 1-depth ego-networks $\mathcal{S}^{k=1}_{v_1}$ for the \textcolor{plum!55!black}{purple vertices labeled with `1'} with its \textcolor{red!80!black}{1-hop neighbors colored in red} (top smaller sub-graphs)\footnote{Note that for these two graphs, any node will have matching ego-networks regardless of their label---see proof for~\autoref{th:EleneMoreExpressive}.}. 
We represent both graphs in terms of $n=16$ equal sub-graphs (one per node), analyzing whether the sub-graph pairs can be distinguished. Both sub-graphs have the same number of nodes ($7$), edges ($12$), and matching degree multisets $\ldblbrace 3^6, 6^1 \rdblbrace$. Furthermore, by coloring the edges as connected to the ego-network root (in \textcolor{green!50!black}{green}) or connecting adjacent neighbors of the root (in \textcolor{orange!70!black}{orange}), the count of edge colors also matches. The Node-Centric (\textbf{ND}) \textsc{Elene} encoding, as shown in~\autoref{eq:ELENEEncRaw}, corresponds to such a coloring and is thus unable to distinguish the pair of graphs. In~\autoref{sec:EleneExpressivity}, we formally prove this upper bound for \textsc{Elene} (\textbf{ND}), which coincides with the expressive power of node-based Sub-graph GNNs.

In contrast, if we consider \textcolor{blue!75!black}{the 1-hop common neighbors (in blue)} of \textcolor{green!50!black}{adjacent nodes labeled `1' and `2'}, the intersecting sub-graphs are distinguishable (bottom smaller sub-graphs). Indeed, the number of edges differs between the $4\times 4$ Rook graph ($6$ edges) and the Shrikhande's graph ($5$ edges). This corresponds to Edge-Centric \textsc{Elene} (\textbf{ED}), and illustrates it has more expressive power than the Node-Centric (\textbf{ND}) one. 

\subsection{Computational Complexity}\label{subsec:ELENEAlgorithmicAnalysis}

The \textsc{Elene} encoding for a single node $v$ requires traversing all the edges in the ego-network $\mathcal{E}^{k}_v$. This can be computed via Breadth-First Search (BFS) bounded by depth $k$, which has worst-case complexity~$\mathcal{O}(d_{\texttt{max}}^k)$. If~$k$ is greater than the diameter in the graph, \textsc{Elene} must traverse all $m$ edges for the $n$ ego-networks with each node as the root. Encoding the entire graph thus has time complexity $\mathcal{O}(n
\cdot \min\{m,d_{\texttt{max}}^k\})$. 
Note that the more expressive edge-centric implementation requires executing the BFS from both nodes alongside an edge, with asymptotically no additional cost.

\textsc{Elene} is best suited for sparse graphs, where $d_{\texttt{max}}\ll n$. For fully connected graphs, $m = |\mathcal{E}^{k}_v| = n \cdot (n - 1) / 2$ which results in time complexity $\mathcal{O}(n^3)$, matching the computational worst-case complexity of GNN-AK~\cite{zhao2022}, NGNNs~\cite{zhang2021}, SUN~\cite{frasca2022understanding}, ESAN~\cite{bevilacqua2022equivariant}, or SPEN~\cite{mitton2023subgraph}.

In terms of memory, \textsc{Elene} encodings require a sparse $3 \cdot (k + 1) \cdot (d_{\texttt{max}} + 1)$-component vector for each node $v \in V$ to represent the multi-set of quadruplets in~\autoref{eq:ELENEEncRaw}. Accordingly, each entry holds the count of observed relative degrees at each distance from $v$. In \autoref{sec:EleneBFS}, we describe a BFS implementation that produces a mapping of each \textsc{Elene} encoding quadruplet to its frequency, and can be parallelized over $p$ processors yielding $\mathcal{O}(n \cdot \min\{m, d_{\texttt{max}}^k\} / p)$ time complexity.

\section{Learning with ELENE: ELENE-L}\label{sec:EleneLearning}

We now introduce two approaches for leveraging \textsc{Elene} encodings in practical learning settings---a simple concatenation of \textsc{Elene}  over network attributes, and a fully learnable variant called \textsc{Elene-L} that updates node and edge representations during the learning process. The first approach represents \textsc{Elene} multi-as sparse vectors containing frequencies of each quadruple $q$---which can be attributes concatenated to $\textbf{x}_v$ or $\textbf{x}_{\langle u, v\rangle}$ if processing $e_v^k$ or $e^{k}_{\langle u, v \rangle}$:
\begin{align}\label{eq:ELENESparseVector}
    \textsc{Elene}_{\texttt{vec}}^{k}(v)_{i} = \bigg|\bigldblbrace q \in e^{k}_v
         \Big| f(q) = i\bigrdblbrace\bigg|.
\end{align}
where $f(q)$ is an indexing function mapping each unique quadruplet to an index in the sparse vector. 

The concatenation approach is the most memory efficient, using only as much memory as the encodings themselves, and can be computed once and reused during training or inference. Furthermore, this approach is directly applicable to any downstream learning model e.g., an MP-GNN, without changing its architecture. 

Certain tasks, however, require structural information \emph{within the sub-graph} to be combined with node or edge \emph{attributes} during learning. One example are $h$-Proximity tasks, which require joint representations that integrate the \textsc{Elene} encodings with attributes and structure. 

\textsc{Elene-L} addresses the limitations of concatenating \textsc{Elene} with node and edge attributes by learning over both \emph{structures} and \emph{attributes} at once. \textsc{Elene-L} learns non-linear functions ($\Phi$, e.g. a Dense Neural Network---DNN) to represent nodes, edges, and Node or Edge-Centric sub-graphs by
representing $u$ in $\mathcal{S}^{k}_v$ via a learnable function $\Phi_{\texttt{nd}}$\footnote{We compress notation by using $\Phi^{t}$ for the output of $\Phi$ at step $t$.}:
\begin{align}\label{eq:ELENE-L-NodeRep}
    \Phi_{\texttt{nd}}^{t}(u|v) = \Phi_{\texttt{nd}}\Big(\textbf{x}_v^{t} \bigconcat \textbf{x}_u^{t} \bigconcat \texttt{Emb}(u|v)\Big),
\end{align}
where $\textbf{x}_v^{t}$ and $\textbf{x}_u^{t}$ are features of $u$ and $v$ at time-step $t$ (i.e. after $t$ layers, such that $t=0$ are `input' features), and $\texttt{Emb}(u|v)$ is a learnable embedding of \textsc{Elene} encodings which we describe in~\autoref{subsec:EleneEmbeddings}. As with \textsc{Elene}, we produce a learnable representation of an edge $\langle u, w\rangle$ in $\mathcal{S}^{k}_v$ via a learnable $\Phi_{\texttt{ed}}$:
\begin{align*}
    \Phi_{\texttt{ed}}^{t}(u, w|v) = \Phi_{\texttt{ed}}\Big(\textbf{x}_v^{t} \bigconcat \textbf{x}_{\langle u, w\rangle}^{t} \bigconcat \textbf{x}_u^{t} \odot \textbf{x}_w^{t} \bigconcat \texttt{Emb}(u, w|v)\Big).
\end{align*}
The representation of the Node-Centric ego-network root at time $t$ is a learnable $\Phi_{\texttt{ND}}$ applied over the aggregation of every node and edge in the sub-graph given a pooling function ($\sum$):
\begin{align}\label{eq:ELENE-L-NodeMessage}
    \Phi_{\texttt{ND}}^{t}(v) \mskip3mu=\mskip3mu \Phi_{\texttt{ND}}\Bigg(\textbf{x}_v^{t}\biggconcat \sum_{u}^{\mathcal{V}^{k}_v} \Phi_{\texttt{nd}}^{t}(u|v)\biggconcat\sum_{\langle u, w\rangle}^{\mathcal{E}^{k}_v} \Phi_{\texttt{ed}}^{t}(u, w|v)\Bigg).
\end{align}
Similarly, the Edge-Centric representation for edge $\langle u, v\rangle$ at time $t$ is a learnable $\Phi_{\texttt{ED}}$ consuming the aggregation over ego-networks containing the edge, as shown in~\autoref{fig:EleneLEdgeDiagram}:
\begin{align}\label{eq:ELENE-L-EdgeMessage}
    \Phi_{\texttt{ED}}^{t}(u, v) = \Phi_{\texttt{ED}}\bigg(\mathlarger{\sum}_{w}^{\mathcal{V}^{k}_{\langle u, v\rangle}} \Phi_{\texttt{ed}}^{t}(u, v|w)\bigg).
\end{align}
\begin{figure}[!b]
    \centering
    \includegraphics[width=1.0\linewidth]{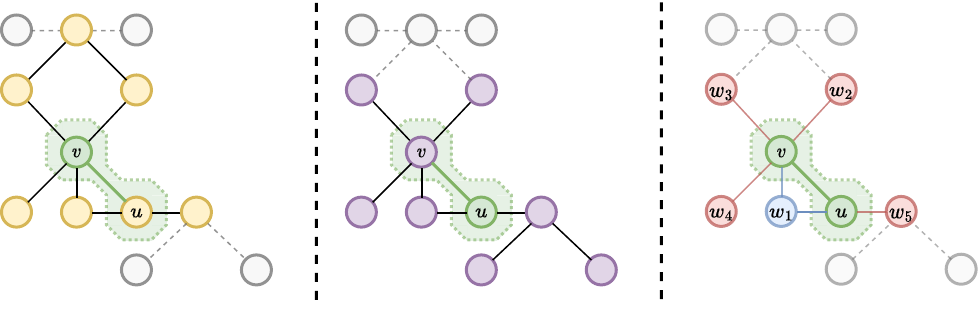}
    \caption{$k$-depth ego-network intersection following~\autoref{eq:ELENE-L-EdgeMessage} for the \textcolor{green!60!black}{green edge}. The ego-networks of \textcolor{orange!70!black}{$u$} and \textcolor{plum!55!black}{$v$} (\textcolor{orange!70!black}{yellow (left)} and \textcolor{plum!55!black}{purple (center)} respectively), intersect on five nodes around \textcolor{green!60!black}{$\langle u, v \rangle$ (dotted, right)}. We show $\mathcal{V}^{k=2}_{\langle u, v\rangle} = \{\textcolor{green!60!black}{u, v}, \textcolor{blue!70!black}{w_1}, \textcolor{red!60!black}{w_2, w_3, w_4, w_5}\}$ (right), indicating nodes reachable in \textcolor{green!60!black}{0 or 1-hops}, \textcolor{blue!70!black}{exactly 1-hop} or \textcolor{red!60!black}{1 or 2-hops} from $u$ and $v$.}
    \label{fig:EleneLEdgeDiagram}
\end{figure}
Node and edge representations at $t+1$ update via a learnable parameter $\gamma$ gating the flow of \textsc{Elene} updates:
\begin{align}\label{eq:ELENE-L}
    \textbf{x}_v^{t + 1} = \textbf{x}_v^{t} + \gamma_{\texttt{ND}} \cdot \Phi_{\texttt{ND}}^{t}(v).
\end{align}
We follow the same update-rule at the edge level:
\begin{align}\label{eq:ELENE-L-Edge}
    \textbf{x}_{\langle u, w\rangle}^{t + 1} = \textbf{x}_{\langle u, w\rangle}^{t} + \gamma_{\texttt{ED}} \cdot \Phi_{\texttt{ED}}^{t}(u, w).
\end{align}
We may use $\textbf{x}_v^{t + 1}$ and $\textbf{x}_{\langle u, w\rangle}^{t + 1}$ directly in the downstream task, or as inputs into an MP-GNN layer during learning---boosting its expressivity. We follow the latter approach in this work.

\subsection{Defining ELENE-L Embeddings}\label{subsec:EleneEmbeddings}

The representations in~\autoref{eq:ELENE-L} and~\autoref{eq:ELENE-L-Edge} leverage attributes and \textsc{Elene} encodings through $\texttt{Emb}(u|v)$ and $\texttt{Emb}(u, w|v)$. To define \textsc{Elene-L} embeddings, three hyper-parameters determine the shapes of embedding matrices: $\omega$, the length of the embedding vectors; $\rho$, the max. degree to be encoded (by default, $\rho = d_{\texttt{max}}$); and $k$, the maximum distance to be encoded (i.e., the ego-network depth). For the quadruplet of $u$, we abbreviate: $$q_u = (\textcolor{green!60!black}{l_u}, \textcolor{red!60!black}{d^1_u}, \textcolor{blue!70!black}{d^2_u}, \textcolor{red!60!black}{d^3_u}) = \big(l_{\mathcal{S}}(u, v), d^{(\texttt{-}1)}_{\mathcal{S}}(u|v), d_{\mathcal{S}}(u), d^{(\texttt{+}1)}_{\mathcal{S}}(u|v)\big)$$ as defined in~\autoref{eq:ELENEEncRaw} and \emph{jointly} embed distance and relative degrees. The embedding $\texttt{Emb}(u|v)$ of $u$ in $\mathcal{S}^{k}_v$ is given by:
\begin{align}\label{eq:EleneNodeEmbeddings}
    &\texttt{Emb}\Big[\textcolor{green!60!black}{l_u}, \textcolor{red!60!black}{d^1_u}, \textcolor{blue!70!black}{d^2_u}, \textcolor{red!60!black}{d^3_u}\Big] = \Big(\textbf{W}_{(\textcolor{green!60!black}{l_u},\textcolor{red!60!black}{d^1_u})}^{1,\texttt{nd}} \bigconcat \textbf{W}_{(\textcolor{green!60!black}{l_u},\textcolor{blue!70!black}{d^2_u})}^{2,\texttt{nd}} \bigconcat \textbf{W}_{(\textcolor{green!60!black}{l_u},\textcolor{red!60!black}{d^3_u})}^{3,\texttt{nd}}\Big),
\end{align}
where $\textbf{W}^{1,\texttt{nd}}$, $\textbf{W}^{2,\texttt{nd}}$, and $\textbf{W}^{3,\texttt{nd}} \in \mathbb{R}^{S \times \omega}$ are three node embedding matrices with $S = (\rho + 1) \cdot (k + 1)$ entries---one for each distance and relative degree pair. A visual representation of the attributes and \textsc{Elene} encodings of $u$ is shown in~\autoref{fig:EleneLDiagram}.

To embed edge $\langle u, w \rangle$ in $\mathcal{S}^{k}_v$, we use the quadruplets of $u$ and $w$, $q_u$ and $q_w$, and increase the granularity of distances to capture the \emph{relative} direction of the edge, following~\autoref{fig:3WLGraphsEleneEdgeDistinguishable}:
\begin{align*}
     \delta_{uw} = l_u - l_w + 1 \in \{0,1,2\}.
\end{align*}
We embed $\langle u, w \rangle$ in a permutation-invariant manner, summing embeddings bidirectionally so $\texttt{Emb}(u, w|v) = \texttt{Emb}(w, u|v)$:
\begin{align}\label{eq:EleneEdgeEmbeddings}
    \texttt{Emb}(u, w|v) = 
    \left(\textbf{W}_{(\textcolor{green!60!black}{l_u},\textcolor{green!60!black}{\delta_{uw}},\textcolor{red!60!black}{d^1_u})}^{1,\texttt{ed}} \bigconcat \textbf{W}_{(\textcolor{green!60!black}{l_u},\textcolor{green!60!black}{\delta_{uw}},\textcolor{blue!70!black}{d^2_u})}^{2,\texttt{ed}} \bigconcat \textbf{W}_{(\textcolor{green!60!black}{l_u},\textcolor{green!60!black}{\delta_{uw}},\textcolor{red!60!black}{d^3_u})}^{3,\texttt{ed}}\right) + 
    \Big(\textbf{W}_{(\textcolor{green!60!black}{l_w},\textcolor{green!60!black}{\delta_{wu}},\textcolor{red!60!black}{d^1_w})}^{1,\texttt{ed}} \bigconcat \textbf{W}_{(\textcolor{green!60!black}{l_w},\textcolor{green!60!black}{\delta_{wu}},\textcolor{blue!70!black}{d^2_w})}^{2,\texttt{ed}} \bigconcat \textbf{W}_{(\textcolor{green!60!black}{l_w},\textcolor{green!60!black}{\delta_{wu}},\textcolor{red!60!black}{d^3_w})}^{3,\texttt{ed}}\Big). 
\end{align}
$\textbf{W}^{1,\texttt{ed}}$, $\textbf{W}^{2,\texttt{ed}}$, and $\textbf{W}^{3,\texttt{ed}} \in \mathbb{R}^{3 \times S \times \omega}$ are edge-level embedding matrices with 3$\times$ more entries to represent the three possible values of $\delta_{uw}$. 
\begin{figure}[!ht]
    \centering
        \includegraphics[width=1.0\linewidth]{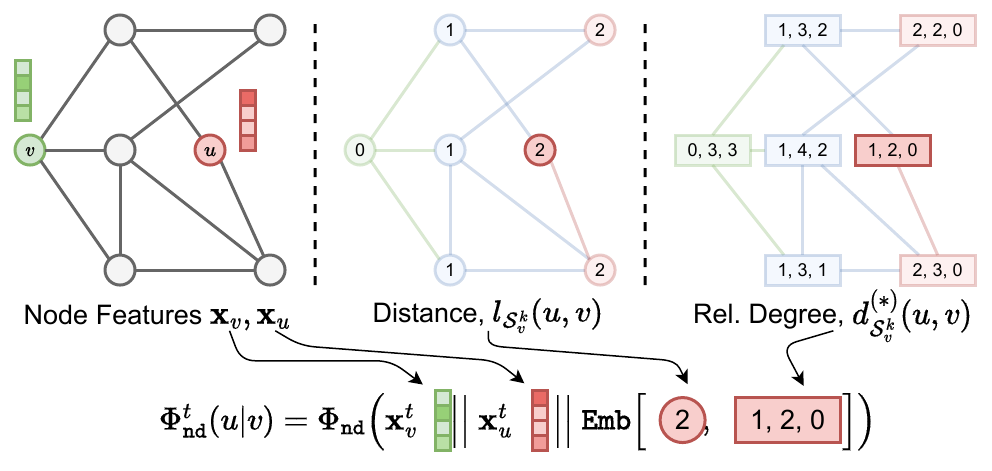}
        \caption{\textsc{Elene-L} encoding of \textcolor{red!80!black}{$u$} in the $k=2$ ego-network of \textcolor{green!50!black}{$v$}. The representation contains the feature vectors of both nodes (\textcolor{green!50!black}{$\mathbf{x}_v$} and \textcolor{red!80!black}{$\mathbf{x}_u$}, left), the distance information of $u$ to $v$ (\textcolor{red!80!black}{2, center}) and the relative degree information (\textcolor{red!80!black}{$[1, 2, 0]$, right}).}
    \label{fig:EleneLDiagram}
\end{figure}

\section{Related Work}
\label{sec:EleneRelWork} 

In this section, we connect related work with \textsc{Elene} encodings and the practical applications of \textsc{Elene} and \textsc{Elene-L} in~\autoref{sec:EleneLearning}. Per~\autoref{sec:EleneIntroduction}, the expressivity of MP-GNNs is often studied through the 1-WL test and its more powerful $k$-WL variants. Despite great successes in many domains~\citep{DuvenaudNIPS2015,battaglia_NIPS2016,gilmer2017neural,Ying-2018-PinSAGE}, the GIN architecture~\citep{xu2018how} showed that one-hop MP-GNNs are at most as expressive as 1-WL. This increased interest in expressive power within the community---in the formal study of MP-GNNs~\citep{pmlr-v162-papp22a}, and to boost message-passing  with spectral~\citep{balcilar2021breaking}, positional~\citep{pmlr-v97-you19b,DEA_GNNs,abboud2022spnn}, path-level~\citep{eliasof2022pathgcn,michel2023pathnns}, sub-graph~\citep{nikolentzos2020khop,zhang2021,bevilacqua2022equivariant,frasca2022understanding,mitton2023subgraph}, structural signals~\citep{morris2019,bodnar2021}, or their combination~\citep{ying2021do,zhao2022,dwivedi2022graph,rampasek2022GPS}. We now discuss the theoretical ability of models to express certain computations---\emph{expressivity} in the abstract---and empirical performance and architectures of graph learning methods.

\textbf{--- Expressivity.} The most common framework to study expressivity are the $k$-WL tests and its variants~\citep{morris2019}. Recent research has also focused on other perspectives, such as matrix languages~\citep{balcilar2021breaking}, or the GD-WL test~\citep{zhang2023rethinking}---which reframes expressivity in terms of graph biconnectivity, capturing the ability to identify cut nodes and edges. Shortest Path Neural Networks (SPNNs)~\citep{abboud2022spnn} introduced a model aggregating across shortest-path distances, but not edges or messages across neighbors, whose expressivity differs from 1-WL and addresses the over-squashing problem~\citep{alon2021on}. Finally, another approach has been through 2-variable counting logics~\citep{barceló2020logic,grohe2021logic}---studying what Boolean statements MP-GNNs can express.

\textsc{Elene} builds on previous expressivity analyses by presenting features that can distinguish challenging 3-WL equivalent graphs---\texttt{SRG}s. In~\autoref{sec:EleneExpressivity}, we will show that \textsc{Elene} can fully identify between \texttt{SRG}s with different parameters, and prove that an \textsc{Elene-L} model can emulate SPNNs. Furthermore, in~\autoref{sec:EleneExperiments} we empirically evaluate our models on the $h$-Proximity tasks and explore whether \emph{structural ($k$-WL) expressivity is all we need}. We find \textsc{Elene-L} outperforms previous strong baselines of SPNNs and Graphormers~\citep{ying2021do,abboud2022spnn} while simply concatenating \textsc{Elene} encodings underperforms, showing that \emph{expressivity} without \emph{attributes} is insufficient for certain tasks.

\textbf{--- Boosting Graph Neural Models.} Besides studying flavors of expressivity, researchers have also focused on improving performance for MP-GNNs and Graph Transformers. We summarize the most relevant families of novel network architectures in connection with \textsc{Elene} and \textsc{Elene-L}:

\textbf{--- Sub-graph MP-GNNs.} \textsc{Elene} is most related to equivariant, sub-graph methods---including $k$-hop GNNs~\citep{nikolentzos2020khop}, Structural MP-GNNs (\textbf{SMP})~\citep{vignac2020smp}, NestedGNNs (\textbf{NGNNs})~\citep{zhang2021}, Identity-GNNs (\textbf{ID-GNN})~\citep{you2021identity}, Equivariant Subgraph Aggregation Networks (\textbf{ESAN})~\citep{bevilacqua2022equivariant}, Ordered Subgraph Aggregation Networks (\textbf{OSAN})~\citep{qian2022ordered},  GNN-As-Kernel (\textbf{GNN-AK})~\citep{zhao2022}, Shortest Path Neural Networks (\textbf{SPNN})~\citep{abboud2022spnn}, Subgraph Union Networks (\textbf{SUN})~\citep{frasca2022understanding}, and Subgraph Permutation Equivariant Networks (\textbf{SPEN})~\citep{mitton2023subgraph}.
By encoding structural attributes of the ego-network sub-graph, \textsc{Elene} captures similar signals as GNN-AK's centroid encodings. However, \textsc{Elene-L} extends node and edge representations within sub-graphs \emph{first}, and then feeds sub-graph aware data into a GNN---rather than applying a GNN on the sub-graph and aggregating its outputs as in NGNNs and GNN-AK. During learning, \textsc{Elene-L} resembles SPEN and ESAN with the \texttt{EGO+} policy with node marking, as the root of the ego-network is implicitly marked by the relative degree and distance pairs. 

These sub-graph GNNs involve processing the sub-graphs during training and inference, which is avoided by approaches like \textsc{Igel}~\citep{alvarez-gonzalez2022beyond}, GSNs~\citep{bouritsas2023} or ESC-GNN~\citep{yan2023efficiently}, and also \textsc{Elene}---as they add sub-structure information without executing a GNN in the sub-graph. \autoref{th:EleneAsExpressive} shows \textsc{Elene} encodings are a superset of sparse \textsc{Igel} vectors. \textsc{Elene} requires no choice of substructure to count. In contrast, GSNs require counting $k$-node structures which has an exponential cost in $k$.
Finally, ESC-GNNs also use structural degree and distance signals directly as inputs. However, \textsc{Elene-L} learns additional embeddings from the structural encodings rather than using them as static features. 

Other approaches instead tackle the representation task by learning to select sub-graphs, such as MAG-GNN~\citep{kong2023maggnn} or Policy-Learn~\citep{bevilacqua2023efficient}---for which \textsc{Elene} signals could act as additional features.  Finally, \textsc{Elene-L} can be understood as a graph rewiring approach as recently exemplified by Dynamic Graph Rewiring (\textbf{DRew})~\citep{gutteridge2023drew}, since each $\textsc{Elene-L}$ layer can be independently parameterized to connect nodes via ego-networks \emph{and} edge-level sub-graphs---adding virtual edges between vertices that are $k$-hops away, and also passing signals across adjacent nodes (i.e. edges) whose $k$-depth ego-networks intersect. In this work, we only explore the impact of \emph{static} edge-level rewiring through relative degrees and Node or Edge-Centric sub-graphs.

The key difference with the aforementioned methods is that \textsc{Elene-L} captures edge-level information both in the encoding and during learning, as per~\autoref{fig:3WLGraphsEleneEdgeDistinguishable}. \autoref{subsec:EleneLExpressivity}, shows edge-level signals boost expressivity and corroborates results from \textbf{SUN} that node-only sub-graph models are upper-bounded by the 3-WL test~\citep{frasca2022understanding}. In~\autoref{subsec:EleneExpressivityResults}, we experimentally validate that \textsc{Elene-L} (\textbf{ED}) but not (\textbf{ND}) reaches 100\% accuracy on SR25, a challenging \texttt{SRG} dataset only solved before without graph perturbations by $\mathcal{O}(n^2)$ \textbf{PPGN-AK}~\citep{HaggaiNIPS2019PPGN,zhao2022}, and partially by \textbf{SPEN}~\citep{mitton2023subgraph}, which distinguished 97\% of non-isomorphic pairs.

 \textbf{--- Perturbation methods.} Beyond sub-graph methods, random perturbations of the graph structure like DropGNN~\citep{papp2021dropgnn}, Random Node Initializations~\citep{abboud2021}, or paths from random walks~\citep{eliasof2022pathgcn} have also shown surprising performance in expressivity tasks. Furthermore, random-walks based methods have been shown to be effective at capturing structural information, including positional information (\textbf{RWPE})~\citep{dwivedi2022graph}. Although \textsc{Elene} in its current definition does not consider graph perturbations or stochastic features, the underlying quadruplets can be easily adapted to ignore dropped-out nodes or edges, and can be seamlessly combined with random node initializations or global positional encodings.

 \textbf{--- Graph Transformers.} Similarly, the extension of Transformer models to graph tasks has led to increased research interest, notably with the introduction of Graphormer~\citep{ying2021do}---which included positional and degree encoding similar to \textsc{Elene}, but using only \emph{absolute} in/out degrees. More recently, Pure Graph Transformers~\citep{kim2022pure} removed graph-specific architecture choices, directly encoding nodes and edges as tokens processed through self-attention and a global read-out. 

Finally, a series of works have yielded high-performance recipes for graph transformers such as GPS~\citep{rampasek2022GPS}---combining strong inductive biases from MP-GNNs, as well as global and local encoding to build high-performance Graph Transformers. Transformers on graphs can be understood as \emph{fully-connected} graph processors, and it has been shown that Graphormers can be emulated through an SPNN~\citep{abboud2022spnn}. In~\autoref{subsec:EleneLSPNNNExpressivity}, we show that \textsc{Elene-L} can, in turn, emulate an SPNN---and transitively a Graphormer. We consider the analysis of edge-level \textsc{Elene} signals in Graph Transformers as future work, focusing our study on MP-GNN architectures.

\section{Expressive Power}\label{sec:EleneExpressivity}

We now analyze the expressive power of \textsc{Elene}---formally answering our question on \emph{which information is sufficient to distinguish \texttt{SRG}s}. We extend recent results on \textsc{Igel}, a sparse vector encoding similar to \textsc{Elene}~\citep{alvarez-gonzalez2022beyond} and show that Edge-Centric and Node-Centric \textsc{Elene} are strictly more expressive than previous methods relying on degrees and distances by comparing their expressivity on \texttt{SRG}s. We then show that \textsc{Elene-L} is at least as expressive as \textsc{Elene}, and prove that \textsc{Elene-L} (\textbf{ED}) is more expressive than \textsc{Elene-L} (\textbf{ND}) and \textsc{Elene} (\textbf{ND}). Finally, we connect our framework with SPNNs~\citep{abboud2022spnn}, showing that the latter can be expressed by an instance of node-centric \textsc{Elene-L} without edge-degree information---motivating our analysis on \emph{attributed} tasks in~\autoref{sec:EleneExperiments}. 

\subsection{Expressive Power of ELENE}

Previous work has shown that encoding-based and sub-graph MP-GNN methods are limited in their ability to distinguish 3-WL equivalent \texttt{SRG}s~\citep{arvind2020,balcilar2021breaking,alvarez-gonzalez2022beyond,frasca2022understanding}. Recently, \citet{alvarez-gonzalez2022beyond} presented \textsc{Igel}---a simple, sparse node feature vector containing counts of distance and degree tuples in an ego-network, showing it is strictly more expressive than the 1-WL test. Following~\autoref{eq:ELENESparseVector}, \textsc{Elene} multi-sets may also be represented as sparse vectors---which can then be used as feature vectors, but also to distinguish ego-network sub-graphs. 

We build on top of the results from~\citep{alvarez-gonzalez2022beyond} and show \textsc{Elene} is at least as expressive as \textsc{Igel}. We then find an upper-bound of expressivity for \textsc{Igel}, which is at most able to distinguish between $n$, $d$ or $\lambda$ parameters of $\texttt{SRG}$s, but not $\mu$, and show Node-Centric and Edge-Centric \textsc{Elene} is strictly more expressive than \textsc{Igel} on $\texttt{SRG}$s as it can explicitly encode all \texttt{SRG} parameters by counting edges:
\begin{theorem}\label{th:EleneAsExpressive}
    Node-Centric \textsc{Elene} is at least as expressive as \textsc{Igel}~\citep{alvarez-gonzalez2022beyond}, and transitively more expressive than 1-WL.
\end{theorem}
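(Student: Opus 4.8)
The plan is to show the claim in two stages: first, that Node-Centric \textsc{Elene} is at least as expressive as \textsc{Igel}; second, that the ``transitively more expressive than 1-WL'' conclusion follows by chaining with the known result of \citet{alvarez-gonzalez2022beyond} that \textsc{Igel} is strictly more expressive than 1-WL. The second stage is essentially free once the first is established, since expressivity comparisons compose: if encoding $A$ refines (distinguishes at least as many pairs as) encoding $B$, and $B$ refines 1-WL strictly, then $A$ refines 1-WL, and strictness is inherited because the pair witnessing $B \succ$ 1-WL is also distinguished by $A$.

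For the first stage, I would recall the precise definition of the \textsc{Igel} encoding: for a node $v$ and depth $k$, \textsc{Igel} records the multiset of pairs $(l_{\mathcal{S}}(u,v), d_{\mathcal{S}}(u))$ over all $u \in \mathcal{V}^k_v$ — i.e., distance-to-root and (sub-graph) degree. The key observation is that the Node-Centric \textsc{Elene} encoding of \autoref{eq:ELENEEncRaw} records, for each $u$, the quadruplet $\big(l_{\mathcal{S}}(u,v),\, d^{(\texttt{-}1)}_{\mathcal{S}}(u|v),\, d_{\mathcal{S}}(u),\, d^{(\texttt{+}1)}_{\mathcal{S}}(u|v)\big)$. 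There is therefore a fixed projection $\pi$ that drops the second and fourth coordinates and maps each \textsc{Elene} quadruplet to the corresponding \textsc{Igel} pair; applying $\pi$ elementwise to the multiset $e^k_v$ yields exactly the \textsc{Igel} multiset for $v$. Consequently, if two nodes (or, aggregating over all nodes, two graphs) have equal \textsc{Elene} encodings, then applying $\pi$ shows they have equal \textsc{Igel} encodings. Contrapositively, whenever \textsc{Igel} distinguishes a pair, so does \textsc{Elene}. This establishes ``at least as expressive.'' I would present this as: the \textsc{Igel} encoding is a coordinate-projection (hence a deterministic function) of the \textsc{Elene} encoding, so it cannot separate a pair that \textsc{Elene} identifies.

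The main obstacle — really the only point requiring care rather than routine bookkeeping — is making sure the two encodings are defined over the same ego-network object and the same notion of degree, so that the projection $\pi$ genuinely lands on the \textsc{Igel} multiset rather than a superficially similar one. Concretely I would check: (i) \textsc{Igel} uses the sub-graph degree $d_{\mathcal{S}}(u)$ within $\mathcal{S}^k_v$ (not the global degree $d_G(u)$), matching the third coordinate of the \textsc{Elene} quadruplet by the decomposition $d_{\mathcal{S}}(u) = d^{(\texttt{-}1)}_{\mathcal{S}}(u|v) + d^{(0)}_{\mathcal{S}}(u|v) + d^{(\texttt{+}1)}_{\mathcal{S}}(u|v)$; (ii) both use the same depth parameter $k$ and the same shortest-path distances within $\mathcal{S}$; and (iii) the comparison at the graph level aggregates the per-node multisets in the same permutation-invariant way (multiset union over $v \in V$), which is legitimate by the permutation-invariance/equivariance established in \autoref{sec:ELENEAppPermInvariant}. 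If the original \textsc{Igel} definition happens to use global degrees, then I would instead note that \textsc{Elene} can be trivially augmented (or that $d_G(u)$ is recoverable when $k$ exceeds the graph radius) — but the cleanest route is the sub-graph-degree reading, under which the projection argument is immediate. Finally, for completeness I would note that strictness of ``more expressive than 1-WL'' is not claimed to be strict relative to \textsc{Igel} here (that strict separation on \texttt{SRG}s is the content of a later theorem), so no additional separating example is needed for this statement.
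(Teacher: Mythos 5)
Your proposal is correct and follows essentially the same route as the paper: the paper's proof likewise observes that the \textsc{Igel} encoding is exactly the projection of each \textsc{Elene} quadruplet onto its distance and absolute-degree coordinates (via an indexing function $f'(l_u, d^2_u)$ that ignores the relative degrees), so \textsc{Elene} contains all information needed to reconstruct \textsc{Igel} vectors, and the 1-WL claim follows by chaining with the cited result of \citet{alvarez-gonzalez2022beyond}. Your additional checks on sub-graph versus global degree and on graph-level aggregation are sensible but not points the paper itself belabors.
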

\begin{proof}
    The \textsc{Igel} encoding in~\citet{alvarez-gonzalez2022beyond} is a simpler version of~\autoref{eq:ELENESparseVector} that only considers distance ($\textcolor{green!60!black}{l_u}$) and \emph{absolute} degree ($\textcolor{blue!70!black}{d^2_u}$):
    \begin{align*}
        \textsc{Igel}_{\texttt{vec}}^{k}(v)_{i} = \bigg|\bigldblbrace (\textcolor{green!60!black}{l_u}, \textcolor{red!60!black}{d^1_u}, \textcolor{blue!70!black}{d^2_u}, \textcolor{red!60!black}{d^3_u}) \in e^{k}_v
             \Big| f'(\textcolor{green!60!black}{l_u}, \textcolor{blue!70!black}{d^2_u}) = i\bigrdblbrace\bigg|.
    \end{align*}
    where $f'(\textcolor{green!60!black}{l_u}, \textcolor{blue!70!black}{d^2_u})$ is a bijective function that does not consider \emph{relative} degrees, in contrast with \textsc{Elene}'s $f$. Thus, for any ego-network, \textsc{Elene} includes all information required to construct \textsc{Igel} vectors, so it is at least as expressive as \textsc{Igel}.
\end{proof}
\begin{theorem}\label{th:EleneMoreExpressive}
    \textsc{Elene} (\textbf{ND}) encodes and distinguishes $\texttt{SRG}$s with different parameters of $n$, $d$, $\lambda$ and $\mu$. 
\end{theorem}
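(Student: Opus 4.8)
The plan is to show that, at depth $k \geq 2$, the Node-Centric \textsc{Elene} encoding of a strongly regular graph is a fixed function of its parameters, and that this function is injective on parameter tuples; distinguishing two $\texttt{SRG}$s with different $(n,d,\lambda,\mu)$ is then immediate. Concretely, I would first treat the primitive case: a connected $\texttt{SRG}$ with $\mu \geq 1$ has diameter $2$, so $\mathcal{S}^{k}_v = G$ for every root $v$ once $k \geq 2$. I then read the quadruplet $\big(l_{\mathcal{S}}(u,v),\,d^{(\texttt{-}1)}_{\mathcal{S}}(u|v),\,d_{\mathcal{S}}(u),\,d^{(\texttt{+}1)}_{\mathcal{S}}(u|v)\big)$ of each $u \in V$ shell by shell, using only the $\texttt{SRG}$ axioms (so the count is the same for every root $v$): the root gives $(0,0,d,d)$; each of the $d$ vertices at distance $1$ has one edge back to $v$, exactly $\lambda$ edges to the remaining distance-$1$ vertices (its common neighbours with $v$), and thus $d-1-\lambda$ edges into the distance-$2$ shell, giving $(1,1,d,d-1-\lambda)$; and each of the $n-1-d$ vertices at distance $2$, being non-adjacent to $v$, has exactly $\mu$ edges into the distance-$1$ shell and, by diameter $2$, none beyond the distance-$2$ shell, giving $(2,\mu,d,0)$. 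Hence, independently of $v$,
\[
 e^{k}_v \;=\; \bigldblbrace\,(0,0,d,d),\ (1,1,d,\,d-1-\lambda)^{d},\ (2,\mu,d,0)^{\,n-1-d}\,\bigrdblbrace,
\]
and the graph-level encoding is $n$ identical copies of this multiset.

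From this it is routine to invert: $n$ is the cardinality of the graph-level multiset; $d$ is the third coordinate appearing in every quadruplet (equivalently the multiplicity of the distance-$1$ quadruplet inside a single $e^{k}_v$); $\lambda$ equals $d-1$ minus the fourth coordinate of the distance-$1$ quadruplet; and $\mu$ is the second coordinate of the distance-$2$ quadruplet. So \textsc{Elene} (\textbf{ND}) determines $(n,d,\lambda,\mu)$, and two $\texttt{SRG}$s with different parameters necessarily receive different encodings. As a sanity check, even $k=1$ already exposes $n$, $d$, and $\lambda$ (from $e^{1}_v = \ldblbrace (0,0,d,d),\,(1,1,1+\lambda,0)^{d}\rdblbrace$), and $\mu$ is then pinned down by the feasibility identity $\mu(n-d-1)=d(d-\lambda-1)$; but the $k \geq 2$ argument is preferable because it reads $\mu$ off \emph{directly} by counting edges into the inner shell, which is exactly what the relative degrees are for and matches the ``counting edges'' intuition of~\autoref{fig:3WLGraphsEleneEdgeDistinguishable}.

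The step I expect to require the most care is not the generic diameter-$2$ case but the degenerate ones: imprimitive $\texttt{SRG}$s with $\mu = 0$ are disjoint unions of cliques $K_{d+1}$ (so there is no distance-$2$ shell and $\mu$ must be read as ``absent $=0$''), the complete graph has no non-adjacent pair at all, and in general one must take $k$ at least the diameter so that $d^{(\texttt{+}1)}_{\mathcal{S}}$ vanishes on the outer shell and no relevant edge is truncated by the ego-network; each of these is easily checked but all must be enumerated for the statement to hold for \emph{every} $\texttt{SRG}$. Finally, it is worth flagging that this theorem is complementary to---not in tension with---the earlier observation that \textsc{Elene} (\textbf{ND}) fails to separate $\texttt{SRG}$s sharing the \emph{same} parameters (the $4\times 4$ Rook and Shrikhande graphs, both $\texttt{SRG}(16,6,2,2)$, per~\autoref{fig:3WLGraphsEleneEdgeDistinguishable}); closing that remaining gap is precisely the role of the Edge-Centric variant analysed next.
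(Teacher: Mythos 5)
Your proof follows essentially the same route as the paper's: compute the shell-by-shell quadruplets of an $\texttt{SRG}$ at depth $k=2$ from the defining parameters and invert to recover $(n,d,\lambda,\mu)$, so the argument is correct and matches the paper in structure. One detail worth noting: your distance-$2$ quadruplet $(2,\mu,d,0)$ is actually the correct one---the second coordinate $d^{(\texttt{-}1)}_{\mathcal{S}}(u|v)$ counts edges from $u$ into the distance-$1$ shell, i.e.\ common neighbours of the non-adjacent pair $u,v$, which is $\mu$ rather than the $d-\mu$ written in the paper (a slip that does not affect the conclusion, since $d$ is also recoverable); your extra attention to the imprimitive $\mu=0$ case is a welcome addition the paper omits.
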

\begin{proof}
    Consider 
    $\texttt{SRG}(n, d, \lambda, \mu) = (V, E)$. The maximum diameter of an \texttt{SRG} is 2~\citep{srg2022}, so we focus on the case where $k=2$. The \textsc{Elene} (\textbf{ND}) encoding of $v \in V$ according to~\autoref{eq:ELENEEncRaw} is: 
    \begin{align*}
        e_{v}^{2} = \bigldblbrace \Big(0, 0, d, d\Big)^1\mskip-10mu, \Big(1, 1, d, d\texttt{-}\lambda\texttt{-}1 \Big)^d\mskip-10mu,
        \Big(2, d-\mu, d, 0\Big)^{n\texttt{-}d\texttt{-}1}\bigrdblbrace
    \end{align*}
    By definition, any Node-Centric ego-network in an $\texttt{SRG}$ has a single root with $d$ neighbors, $d$ neighbors with one edge with the root and $d-\lambda-1$ edges to the next layer, and the remaining $n-d-1$ non-adjacent nodes to the root each have $d-\mu$ edges with the $d$ neighbors of the root. Consider $\texttt{SRG}'(n', d', \lambda', \mu') = (V', E')$. If any of the parameters between $\texttt{SRG}$ and $\texttt{SRG}'$ differ, so will $e_{v}^{2}$ from $e_{v'}^{2}$. This is not the case for \textsc{Igel}, which can at most capture $n$, $d$, and $\lambda$:
    \begin{align*}
        \textsc{Igel}_{v}^{1} &= \bigldblbrace \Big(0, d\Big)^1\mskip-10mu, \Big(1, 1+\lambda\Big)^d\bigrdblbrace\\
        \textsc{Igel}_{v}^{2} &= \bigldblbrace \Big(0, d\Big)^1\mskip-10mu, \Big(1, d\Big)^d\mskip-10mu,
        \Big(2, d\Big)^{n\texttt{-}d\texttt{-}1}\bigrdblbrace
    \end{align*}
    Thus, \textsc{Elene} (\textbf{ND}) can encode and distinguish all parameters of $\texttt{SRG}$s---outperforming \textsc{Igel}. However, \textsc{Elene} (\textbf{ND}) \emph{cannot distinguish} non-isomorphic $\texttt{SRG}$s when $n = n'$, $d = d'$, $\lambda = \lambda'$, and $\mu = \mu'$.
\end{proof}
\begin{corollary}
    \textsc{Elene} (\textbf{ND}) is more expressive than \textsc{Igel} and 1-WL, per~\autoref{th:EleneAsExpressive} \&~\autoref{th:EleneMoreExpressive}.
\end{corollary}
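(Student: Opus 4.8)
The statement to prove is the Corollary: that \textsc{Elene} (\textbf{ND}) is more expressive than \textsc{Igel} and 1-WL. This follows directly by combining Theorems \ref{th:EleneAsExpressive} and \ref{th:EleneMoreExpressive}.

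The plan is as follows. First, I would invoke Theorem~\ref{th:EleneAsExpressive}, which establishes that Node-Centric \textsc{Elene} is at least as expressive as \textsc{Igel} (and transitively more expressive than 1-WL, since \textsc{Igel} is known to be strictly more expressive than 1-WL by the results of \citet{alvarez-gonzalez2022beyond}). This gives the ``$\geq$'' direction of the expressivity comparison: any pair of graphs distinguishable by \textsc{Igel} is also distinguishable by \textsc{Elene} (\textbf{ND}), because the \textsc{Elene} quadruplet multi-set contains enough information to reconstruct the \textsc{Igel} tuple multi-set.

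Second, to upgrade ``at least as expressive'' to ``strictly more expressive,'' I would exhibit a pair of non-isomorphic graphs that \textsc{Igel} fails to distinguish but \textsc{Elene} (\textbf{ND}) distinguishes. Theorem~\ref{th:EleneMoreExpressive} already supplies exactly this: take two Strongly Regular Graphs $\texttt{SRG}(n,d,\lambda,\mu)$ and $\texttt{SRG}(n,d,\lambda,\mu')$ with $\mu \neq \mu'$ (for instance, two \texttt{SRG}s agreeing on $n,d,\lambda$ but differing on $\mu$). By the computation in the proof of Theorem~\ref{th:EleneMoreExpressive}, the \textsc{Igel} encoding at depths $1$ and $2$ depends only on $n$, $d$, and $\lambda$, hence coincides for the two graphs; but the \textsc{Elene} (\textbf{ND}) encoding contains the quadruplet $(2, d-\mu, d, 0)^{n-d-1}$, whose first degree component $d-\mu$ differs between the two graphs. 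Concretely one can cite a known pair of \texttt{SRG}s with identical $(n,d,\lambda)$ but distinct $\mu$, or simply note that the SR family contains such pairs; the key point is that such non-isomorphic pairs exist and are separated by \textsc{Elene} (\textbf{ND}) but not by \textsc{Igel}.

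Combining the two, \textsc{Elene} (\textbf{ND}) distinguishes every pair \textsc{Igel} distinguishes and strictly more, so it is strictly more expressive than \textsc{Igel}; and since \textsc{Igel} is strictly more expressive than 1-WL, transitivity of the expressivity preorder yields that \textsc{Elene} (\textbf{ND}) is strictly more expressive than 1-WL as well. The main subtlety — really the only one — is making sure the witness pair in the second step is genuinely non-isomorphic and genuinely indistinguishable by \textsc{Igel} at all relevant depths; since \texttt{SRG}s have diameter at most $2$, checking depths $k \in \{1, 2\}$ as done in Theorem~\ref{th:EleneMoreExpressive} suffices, so no further work is needed. The proof is thus a short assembly of the two preceding results.
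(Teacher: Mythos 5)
Your overall assembly is the same one the paper intends (the corollary carries no separate proof there beyond citing Theorems~\ref{th:EleneAsExpressive} and~\ref{th:EleneMoreExpressive}), and the 1-WL half of your argument is fine: ``at least as expressive as \textsc{Igel}'' composed with the cited fact that \textsc{Igel} is strictly more expressive than 1-WL gives strictness over 1-WL by transitivity. The problem is the witness you propose for strictness over \textsc{Igel}. You ask for two strongly regular graphs agreeing on $n$, $d$, $\lambda$ but differing on $\mu$. No such pair exists: every $\texttt{SRG}(n,d,\lambda,\mu)$ satisfies the standard counting identity $d(d-\lambda-1)=(n-d-1)\mu$ (count edges between neighbors and non-neighbors of a fixed vertex two ways), so $\mu$ is determined by $(n,d,\lambda)$ whenever $\mu>0$, and the degenerate case $\mu=0$ forces $d=\lambda+1$. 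Consequently the quadruplet $(2,\,d-\mu,\,d,\,0)$ that you point to as the separating feature can never differ between two \texttt{SRG}s that \textsc{Igel} fails to separate.

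More broadly, the step ``\textsc{Igel} captures only $(n,d,\lambda)$, \textsc{Elene} additionally captures $\mu$, hence \textsc{Elene} separates strictly more'' does not go through inside the \texttt{SRG} family at all: if two \texttt{SRG}s share $(n,d,\lambda)$ they share $\mu$ and hence have identical \textsc{Elene} (\textbf{ND}) encodings as computed in Theorem~\ref{th:EleneMoreExpressive}; if they differ in any of $(n,d,\lambda)$, then \textsc{Igel} at depth $1$ or $2$ already separates them. So a genuine witness for ``\textsc{Elene} (\textbf{ND}) $\succ$ \textsc{Igel}'' must come from outside the parameter-distinguished \texttt{SRG}s --- e.g.\ a pair of graphs where the relative-degree components $d^{(-1)}_{\mathcal{S}}$, $d^{(+1)}_{\mathcal{S}}$ differ while the distance/absolute-degree pairs that \textsc{Igel} records coincide. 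Your proof (and, to be fair, the paper's own chain of citations) does not supply such a pair, so the ``strictly more expressive than \textsc{Igel}'' clause is not yet established; only ``at least as expressive as \textsc{Igel}'' and ``strictly more expressive than 1-WL'' follow from the two theorems as written.
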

\begin{corollary}\label{co:EleneSRGLimit}
    \textsc{Elene} (\textbf{ND}) signals at the node-level are not capable of distinguishing between non-isomorphic $\texttt{SRG}$s with equal parameters---e.g. the graphs in~\autoref{fig:3WLGraphsEleneEdgeDistinguishable}.
\end{corollary}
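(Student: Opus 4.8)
The plan is to leverage the explicit closed form for the Node-Centric \textsc{Elene} encoding of any vertex in a strongly regular graph that was already derived in the proof of~\autoref{th:EleneMoreExpressive}. That computation shows that for $\texttt{SRG}(n, d, \lambda, \mu)$ every vertex $v$ has the identical encoding
\[
    e_{v}^{2} = \bigldblbrace \Big(0, 0, d, d\Big)^1\mskip-10mu, \Big(1, 1, d, d\texttt{-}\lambda\texttt{-}1 \Big)^d\mskip-10mu,
        \Big(2, d-\mu, d, 0\Big)^{n\texttt{-}d\texttt{-}1}\bigrdblbrace ,
\]
which depends only on the four parameters $(n, d, \lambda, \mu)$ and not on the identity of $v$ nor on which \texttt{SRG} with those parameters we started from. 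First I would note that the multi-set of Node-Centric \textsc{Elene} encodings over the whole graph is therefore $\{\mskip-5mu\{ e_v^2 \mid v \in V \}\mskip-5mu\}$, which for $\texttt{SRG}(n, d, \lambda, \mu)$ is simply $n$ copies of the single quadruplet-multi-set above. Hence any graph-level readout that is a function of this multi-set of node encodings (as is the case for \textsc{Elene} used as node features fed into a permutation-invariant model) produces the same output for two non-isomorphic \texttt{SRG}s sharing all four parameters.

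Second, I would instantiate this on the concrete pair from~\autoref{fig:3WLGraphsEleneEdgeDistinguishable}: both the $4\times 4$ Rook graph and the Shrikhande graph are $\texttt{SRG}(16, 6, 2, 2)$, yet they are non-isomorphic (a classical fact; e.g.\ the Rook graph contains $K_4$ subgraphs whereas Shrikhande is triangle-free in the relevant local sense — more precisely their local structure around an edge differs, which is exactly what the Edge-Centric variant exploits). Plugging $(n,d,\lambda,\mu)=(16,6,2,2)$ into the closed form gives $e_v^2 = \bigldblbrace (0,0,6,6)^1, (1,1,6,3)^6, (2,4,6,0)^{9} \bigrdblbrace$ for every vertex of either graph, so the two graphs receive byte-for-byte identical \textsc{Elene} (\textbf{ND}) representations and cannot be told apart. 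This shows the claimed limitation is not merely an upper-bound artifact but is realized.

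The only genuine content beyond~\autoref{th:EleneMoreExpressive} is the observation that vertex-transitivity of \texttt{SRG}s is not even needed: the closed form already shows the encoding is vertex-independent, so the graph-level multi-set collapses to $n$ identical copies regardless of any automorphism considerations. I do not expect a serious obstacle here; the corollary is essentially a restatement of the ``cannot distinguish'' sentence concluding the proof of~\autoref{th:EleneMoreExpressive}, made concrete on the Rook/Shrikhande pair. The one point to be careful about is scope: the statement is about \emph{node-level} \textsc{Elene} (\textbf{ND}) signals only, so I would explicitly flag that this says nothing about the Edge-Centric encoding, whose ability to separate this very pair is established separately (and illustrated in~\autoref{fig:3WLGraphsEleneEdgeDistinguishable} by the differing edge counts, $6$ versus $5$, in the intersecting sub-graphs).
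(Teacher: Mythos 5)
Your proposal is correct and follows essentially the same route as the paper, which states this corollary as an immediate consequence of the closed form derived in the proof of \autoref{th:EleneMoreExpressive} (whose final sentence is precisely this claim); your explicit instantiation for $\texttt{SRG}(16,6,2,2)$ is a welcome concretization and the arithmetic checks out. One minor aside: your parenthetical that the Shrikhande graph is ``triangle-free in the relevant local sense'' is loosely worded (both graphs have $\lambda=2$ and hence plenty of triangles; the actual local distinction is that vertex neighborhoods are $2K_3$ in the Rook graph versus a $6$-cycle in Shrikhande), but this does not affect your argument since non-isomorphism of the pair is a classical fact the paper also takes as given.
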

\begin{proposition}
    \textsc{Elene} (\textbf{ED}, leveraging both $e_v^k\ \forall v \in V$ and $e^{k}_{\langle u, v \rangle}\ \forall (u, v) \in E$) is strictly more expressive than \textsc{Elene} (\textbf{ND}), as it can distinguish the pair of graphs in~\autoref{fig:3WLGraphsEleneEdgeDistinguishable}.
\end{proposition}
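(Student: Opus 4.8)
The proposition asks to show \textsc{Elene} (\textbf{ED}) is strictly more expressive than \textsc{Elene} (\textbf{ND}). Strictness requires two things: (i) \textsc{Elene} (\textbf{ED}) is at least as expressive as \textsc{Elene} (\textbf{ND}), and (ii) there is a pair of non-isomorphic graphs that \textsc{Elene} (\textbf{ED}) distinguishes but \textsc{Elene} (\textbf{ND}) does not. For (i), I would observe that the \textbf{ED} encoding by definition includes the full multi-set $\ldblbrace e_v^k \mid v \in V\rdblbrace$ of node-centric encodings as part of its input (it \emph{additionally} supplies the edge intersection encodings $e^k_{\langle u,v\rangle}$), so any two graphs separated by \textbf{ND} are a fortiori separated by \textbf{ED}; this is immediate from the definitions in \autoref{sec:EleneDef} and needs only a sentence. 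For (ii), the natural witnesses are the $4\times 4$ Rook and Shrikhande graphs of \autoref{fig:3WLGraphsEleneEdgeDistinguishable}, and the bulk of the argument is there.

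\textbf{The two halves of (ii).} First, that \textbf{ND} fails: both graphs are $\texttt{SRG}(16,6,2,2)$, so by \autoref{th:EleneMoreExpressive} (and \autoref{co:EleneSRGLimit}) their node-level \textsc{Elene} encodings are the identical multi-set $\ldblbrace (0,0,6,6)^1, (1,1,6,3)^6, (2,4,6,0)^9\rdblbrace$ repeated $16$ times, hence \textbf{ND} cannot tell them apart. Second, that \textbf{ED} succeeds: I would pick an arbitrary edge $\langle u,v\rangle$ in each graph and compute the intersection sub-graph $\mathcal{S}^{k=1}_{\langle u,v\rangle} = (\mathcal{V}^1_u\cap\mathcal{V}^1_v,\ \mathcal{E}^1_u\cap\mathcal{E}^1_v)$. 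Since every two adjacent vertices share exactly $\lambda = 2$ common neighbors, the vertex set of the intersection is $\{u, v\}$ together with those $2$ common neighbors, so $|\mathcal{V}^1_{\langle u,v\rangle}| = 4$ in both graphs. The discriminating quantity is the number of edges \emph{among} those four vertices: in the Rook graph the two common neighbors of $u,v$ are themselves adjacent (they lie in a common row/column clique with $u$ and $v$), contributing an extra edge, whereas in the Shrikhande graph the two common neighbors are non-adjacent. Counting: $\langle u,v\rangle$ itself plus the two $u$--neighbor and two $v$--neighbor edges gives $5$ edges common to both; the Rook graph has one more, for $6$. This matches the claim in the figure caption. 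So the \textbf{ED} encoding $e^1_{\langle u,v\rangle}$, which tallies the quadruplets over $\mathcal{S}^1_{\langle u,v\rangle}$ and in particular sees different edge counts, differs between the two graphs, and since this holds for every edge (by vertex- and edge-transitivity of both \texttt{SRG}s) the full \textbf{ED} multi-set differs.

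\textbf{Main obstacle.} The one place to be careful is the edge-count computation in the intersection sub-graph: I need to argue cleanly that in the intersection $\mathcal{E}^1_u\cap\mathcal{E}^1_v$ the relevant edges are exactly $\langle u,v\rangle$, the edges from $u$ to its two common-with-$v$ neighbors, the edges from $v$ to those same two neighbors, and possibly the single edge between the two common neighbors — and that whether that last edge is present is precisely what separates Rook ($\lambda$-triangle structure closes) from Shrikhande. This rests on the concrete combinatorics of the two graphs rather than on generic \texttt{SRG} parameters (indeed $\lambda=2$ alone does not force the two common neighbors to be adjacent or not), so I would either cite the known structure of these two graphs or verify it directly on the $16$-vertex adjacency descriptions (Rook $= K_4\,\square\,K_4$; Shrikhande $=$ Cayley graph of $\mathbb{Z}_4^2$ with connection set $\{\pm(1,0),\pm(0,1),\pm(1,1)\}$). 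Everything else — permutation invariance of the encoding, reduction of \textbf{ED} $\supseteq$ \textbf{ND} — is bookkeeping already set up earlier in the paper.
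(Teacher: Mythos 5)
Your proof is correct and follows essentially the same route as the paper, which establishes this proposition via the $4\times 4$ Rook / Shrikhande pair: identical node-centric encodings (both graphs being $\texttt{SRG}(16,6,2,2)$, per \autoref{th:EleneMoreExpressive} and \autoref{co:EleneSRGLimit}), but edge-intersection sub-graphs with $6$ versus $5$ edges. Your explicit verification that the two common neighbors are adjacent in $K_4\,\square\,K_4$ but not in the Shrikhande Cayley graph is in fact more careful than the paper's figure-based edge count, but it is the same argument.
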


\subsection{Expressive Power of ELENE-L}\label{subsec:EleneLExpressivity}

\begin{theorem}
    \textsc{Elene-L} with the sum as the pooling operator is at least as expressive as \textsc{Elene}.
\end{theorem}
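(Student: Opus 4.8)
\noindent\textbf{Approach.} The plan is to show that one can instantiate the learnable components of \textsc{Elene-L} (the maps $\Phi_{\texttt{nd}}, \Phi_{\texttt{ed}}, \Phi_{\texttt{ND}}, \Phi_{\texttt{ED}}$, the embedding matrices $\textbf{W}^{\cdot,\texttt{nd}}, \textbf{W}^{\cdot,\texttt{ed}}$, and the gates $\gamma$) so that a single \textsc{Elene-L} update computes, as part of the node (resp.\ edge) representation, a quantity from which the \textsc{Elene} multi-set $e^k_v$ (resp.\ $e^k_{\langle u,v\rangle}$) can be recovered. Since the \textsc{Elene} encoding is exactly the information used by any \textsc{Elene}-based model, anything \textsc{Elene} distinguishes is then distinguished by this \textsc{Elene-L} instantiation; hence \textsc{Elene-L} is at least as expressive. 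The key technical fact is the classical one (used in \citep{xu2018how} and in deep-sets results) that a sum of one-hot — or more generally sufficiently ``separated'' — vectors over a finite multi-set is an injective encoding of that multi-set, so the sum pooling in \autoref{eq:ELENE-L-NodeMessage} and \autoref{eq:ELENE-L-EdgeMessage} loses nothing.

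\noindent\textbf{Key steps, in order.} First I would fix $t=0$ with input node/edge features being (say) constant, and choose $\omega = S$ (resp.\ $\omega = 3S$) so that each embedding matrix $\textbf{W}^{i,\texttt{nd}}$ can be taken to be the identity: then $\texttt{Emb}(u|v)$ from \autoref{eq:EleneNodeEmbeddings} is the concatenation of the three one-hot indicators of the pairs $(\textcolor{green!60!black}{l_u},\textcolor{red!60!black}{d^1_u})$, $(\textcolor{green!60!black}{l_u},\textcolor{blue!70!black}{d^2_u})$, $(\textcolor{green!60!black}{l_u},\textcolor{red!60!black}{d^3_u})$, which together determine — and are determined by — the quadruplet $q_u=(\textcolor{green!60!black}{l_u},\textcolor{red!60!black}{d^1_u},\textcolor{blue!70!black}{d^2_u},\textcolor{red!60!black}{d^3_u})$ because $\textcolor{green!60!black}{l_u}$ is shared. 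Second, I would take $\Phi_{\texttt{nd}}$ to be a projection that outputs exactly this embedding (ignoring $\textbf{x}_v^0,\textbf{x}_u^0$), so that $\sum_{u\in\mathcal{V}^k_v}\Phi_{\texttt{nd}}^0(u|v)$ is a histogram over the finite set of possible quadruplets, i.e.\ an injective image of the multi-set $\ldblbrace q_u : u\in\mathcal{V}^k_v\rdblbrace = e^k_v$. Third, I would take $\Phi_{\texttt{ED}}$ (and $\Phi_{\texttt{ed}}$, $\Phi_{\texttt{ND}}$) to be the identity/projection so that, by \autoref{eq:ELENE-L} and \autoref{eq:ELENE-L-Edge} with $\gamma=1$, the updated representations $\textbf{x}_v^{1}$, $\textbf{x}_{\langle u,v\rangle}^{1}$ contain this histogram as a sub-block; for the \textbf{ED} case I would do the analogous construction over $\mathcal{V}^k_{\langle u,v\rangle}$ using the edge embeddings of \autoref{eq:EleneEdgeEmbeddings} and the fact that $\delta_{uw}$ together with the two distances encodes the distance of $u$ to both endpoints, recovering $e^k_{\langle u,v\rangle}$. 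Finally I would note that a downstream read-out (or the next MP-GNN/\textsc{Elene-L} layer) can, on these features, reproduce the output of any \textsc{Elene}-based predictor; therefore if two graphs are distinguished by \textsc{Elene} they are distinguished by this \textsc{Elene-L} model, and the ``at least as expressive'' claim follows. I would also remark that this holds for both the \textbf{ND} and \textbf{ED} variants simultaneously, matching the corresponding \textsc{Elene} flavor.

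\noindent\textbf{Main obstacle.} The delicate point is the injectivity of sum-pooling: one must argue that the multi-set $e^k_v$ is recoverable from $\sum_u \Phi_{\texttt{nd}}^0(u|v)$, and the concatenation-of-three-marginal-histograms in \autoref{eq:EleneNodeEmbeddings} is a priori a lossy summary of the joint quadruplet distribution (it gives the three 2D marginals sharing the coordinate $\textcolor{green!60!black}{l_u}$, not the full joint). I would handle this either by (i) observing that, with the stated embedding, one can instead place a single one-hot on the full index $f(q_u)$ — the construction only needs \emph{some} admissible choice of $\Phi_{\texttt{nd}}$ and $\textbf{W}$, and $\texttt{Emb}$ as a learnable function class certainly contains such a map once $\omega$ is large enough — or (ii) arguing that for the graph families at issue the marginals already suffice. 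Option (i) is cleaner and is the route I would take, so the real content reduces to: (a) the embedding class is expressive enough to realize an injective per-element code, and (b) sums of such codes over bounded multi-sets are injective, which is the standard multiset-encoding lemma. Beyond that, the proof is bookkeeping: checking that the edge-centric embedding in \autoref{eq:EleneEdgeEmbeddings} likewise admits an injective per-edge code over $\mathcal{S}^k_{\langle u,v\rangle}$, and that the residual updates \autoref{eq:ELENE-L}–\autoref{eq:ELENE-L-Edge} preserve rather than overwrite this information (immediate, since they add).
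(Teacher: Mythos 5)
Your proposal is correct and follows essentially the same route as the paper's proof: instantiate the learnable maps and embedding matrices so that each sub-graph element contributes a one-hot code, let sum pooling accumulate these into a histogram, and observe that the \textbf{ED} variant can only add information on top of the \textbf{ND} one. The one place where you genuinely diverge is the point you flag as the ``main obstacle'': the paper sets $\Phi_{\texttt{nd}}$ to the identity and sums the concatenated pair indicators from \autoref{eq:EleneNodeEmbeddings} directly, which yields only the three 2D marginals of $(\textcolor{green!60!black}{l_u},d^i_u)$ and then asserts this ``matches \textsc{Elene} in \autoref{eq:ELENESparseVector}''---a claim that is literally true only for the $3\cdot(k+1)\cdot(d_{\texttt{max}}+1)$-dimensional marginal representation described in \autoref{subsec:ELENEAlgorithmicAnalysis}, not for the quadruplet-indexed vector of \autoref{eq:ELENESparseVector}. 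Your option (i)---letting $\Phi_{\texttt{nd}}$ decode the three shared-coordinate pair indicators into a single one-hot on $f(q_u)$ \emph{before} the sum, then invoking injectivity of sum pooling over bounded multi-sets---closes that gap and recovers the joint quadruplet histogram, so your version of the argument is actually the more careful one; everything else (the \textbf{ED} bookkeeping, the residual updates preserving the sub-block) matches the paper.
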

\begin{proof}
    We first show \textsc{Elene-L} (\textbf{ND}) is at least as expressive as \textsc{Elene} (\textbf{ND}). We then show that the \textbf{ED} variants are at least as expressive as \textbf{ND} variants (\autoref{prop:EleneLEdge}), and show through~\autoref{fig:3WLGraphsEleneEdgeDistinguishable} that \textsc{Elene-L} (\textbf{ED}) is more powerful than Node-Centric \textsc{Elene-L} (\textbf{ND}).
    
    \textbf{On} \textsc{Elene-L} (\textbf{ND}). $\forall v \in V$, the $\textsc{Elene-L}(\textbf{x}_v^{t})$ representation of $v$ is given by $\Phi_{\texttt{ND}}^{t}(v)$ as per~\autoref{eq:ELENE-L-NodeMessage}. $\Phi_{\texttt{ND}}^{t}(v)$ is the result of applying $\Phi_{\texttt{ND}}$ to the concatenation of $\textbf{x}_v^t$ and the combined representations of every $u \in \mathcal{V}^{k}_v$ and $\langle u, w\rangle \in \mathcal{E}^{k}_v$. Let $\Phi_{\texttt{out}}$ and $\Phi_{\texttt{nd}}$ be the identity function, we exclude edge-level information by discarding the output of $\Phi_{\texttt{ed}}$. We now expand $\hat{\Phi}_{\texttt{ND}}^{t}(v)$, which is $\Phi_{\texttt{ND}}^{t}(v)$ with the changes to the learnable $\Phi$:
    \begin{align*}
        \hat{\Phi}_{\texttt{ND}}^{t}(v) = \bigg(\textbf{x}_v^{t} \biggconcat \sum_{u}^{\mathcal{V}^{k}_v} \Big(\textbf{x}_v^{t} \bigconcat \textbf{x}_u^{t} \bigconcat \texttt{Emb}(u|v)\Big)\bigg).
    \end{align*}
    We discard repeated $\textbf{x}_v^{t}$ terms, and rewrite the representation of $v$, distributing the sum over the concatenated vector:
    \begin{align*}
        \hat{\Phi}_{\texttt{ND}}^{t}(v) = \bigg(\textbf{x}_v^{t} \biggconcat \sum_{u}^{\mathcal{V}^{k}_v} \textbf{x}_u^{t} \biggconcat \sum_{u}^{\mathcal{V}^{k}_v} \texttt{Emb}(u|v)\bigg).
    \end{align*}
    Let $\textbf{W}^{1,\texttt{nd}}, \textbf{W}^{2,\texttt{nd}}, \textbf{W}^{3,\texttt{nd}} \in \mathbb{R}^{S \times S}$ used by $\texttt{Emb}$ be identity matrices so every relative degree and distance pair out of $S = d_{\texttt{max}} \cdot (k + 1)$ has a single position in $\textbf{W}$. By using the sum as the pooling function, we obtain the frequency of each relative degree and distance pair, matching \textsc{Elene} in~\autoref{eq:ELENESparseVector}. Thus, $\hat{\Phi}_{\texttt{out}}^{t}(v)$ contains the information contained in the \textsc{Elene} multi-set, reaching at least the same expressivity. \qed
    \begin{proposition}\label{prop:EleneLEdge}
        \textsc{Elene-L} (\textbf{ED}) variants with the sum as the pooling operator are at least as expressive as \textsc{Elene}.
    \end{proposition}
    \textbf{On} \textsc{Elene-L} (\textbf{ED}). We had discarded $\Phi_{\texttt{ed}}$, showing \textbf{ED} variants are at least as expressive as \textbf{ND} variants, since the concatenation of edge-level information can only match or boost expressivity. Thus, (\textbf{ND}) and (\textbf{ED}) variants of \textsc{Elene-L} are as expressive as \textsc{Elene}.
\end{proof}
\begin{theorem}\label{th:EleneLMoreExpressive}
    \textsc{Elene-L} (\textbf{ED}) is more expressive than \textsc{Elene-L} (\textbf{ND}) and \textsc{Elene} (\textbf{ND}).
\end{theorem}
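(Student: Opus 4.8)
\textbf{Proof plan for \autoref{th:EleneLMoreExpressive}.}
The plan is to split the statement into two halves: (i) \textsc{Elene-L} (\textbf{ED}) is \emph{at least} as expressive as both \textsc{Elene-L} (\textbf{ND}) and \textsc{Elene} (\textbf{ND}); and (ii) the inclusion is \emph{strict}, i.e.\ there is a pair of non-isomorphic graphs that \textsc{Elene-L} (\textbf{ED}) separates but \textsc{Elene-L} (\textbf{ND}) / \textsc{Elene} (\textbf{ND}) cannot. For direction (i), I would invoke \autoref{prop:EleneLEdge} and the preceding theorem: \textsc{Elene-L} (\textbf{ED}) is at least as expressive as \textsc{Elene} (and hence as \textsc{Elene} (\textbf{ND})), and the argument already given---set $\Phi_{\texttt{ed}}$ to the zero map so the edge-intersection channel is discarded---shows that any \textsc{Elene-L} (\textbf{ND}) computation is a special case of an \textsc{Elene-L} (\textbf{ED}) computation, since the \textbf{ED} update rule concatenates the extra edge-level term $\Phi_{\texttt{ED}}^{t}(u,v)$ on top of everything the \textbf{ND} update already has. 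So ``at least as expressive'' is essentially bookkeeping over the definitions in \autoref{eq:ELENE-L-NodeMessage}--\autoref{eq:ELENE-L-Edge}.

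For the strictness direction, I would use the $4\times4$ Rook vs.\ Shrikhande pair of \autoref{fig:3WLGraphsEleneEdgeDistinguishable}, exactly as in \autoref{co:EleneSRGLimit} and the \textbf{ED}-vs-\textbf{ND} proposition. First I would note that \textsc{Elene-L} (\textbf{ND}), run for any number of layers with any choice of the learnable maps, produces a graph-level readout that is a function of the multiset $\{\!\{ e_v^k : v \in V \}\!\}$ together with the node/edge attributes; since both graphs are unattributed $\texttt{SRG}(16,6,2,2)$'s with identical \textsc{Elene} (\textbf{ND}) encodings at every node (by \autoref{th:EleneMoreExpressive}, all node ego-networks in an \texttt{SRG} have the same quadruplet multiset, and the two graphs share all four parameters), the \textbf{ND} computation trees are identical and the readouts coincide no matter how deep the model is. Then I would exhibit the separating feature on the \textbf{ED} side: the edge-intersection subgraph $\mathcal{S}^k_{\langle u,v\rangle}$ across an edge has $6$ edges in the Rook graph and $5$ in the Shrikhande graph (the count already highlighted in \autoref{fig:3WLGraphsEleneEdgeDistinguishable}), so choosing $\Phi_{\texttt{ed}}$ and $\Phi_{\texttt{ED}}$ to recover, via identity embeddings and sum pooling, the size of $\mathcal{E}^k_{\langle u,v\rangle}$ makes $\Phi_{\texttt{ED}}^{t}(u,v)$ differ; aggregating these over all edges yields distinct graph-level representations. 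This instantiates direction (ii).

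The one subtlety---and the step I expect to need the most care---is making the ``\textbf{ND} cannot distinguish them, even with many layers'' claim airtight. A single-layer argument is immediate, but I must argue that iterating the \textbf{ND} update cannot break the symmetry: the right framing is an invariant-preservation induction. Define the claim that at every step $t$ the two graphs have the same multiset of node features and the same multiset of edge features (with a matching that is consistent with the local ego-network structure); the base case $t=0$ holds because the graphs are unattributed and regular. For the inductive step, observe that $\Phi_{\texttt{nd}}^{t}(u|v)$, $\Phi_{\texttt{ed}}^{t}(u,w|v)$, and hence $\Phi_{\texttt{ND}}^{t}(v)$ depend only on the isomorphism type of the attributed ego-network $\mathcal{S}^k_v$, which is identical across the two graphs for every $v$ by the \texttt{SRG} structure; therefore the updated feature multisets again agree. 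A convenient way to package this is to point out that \textsc{Elene-L} (\textbf{ND}) is refined by (a bounded number of iterations of) the node-marked subgraph WL hierarchy, which \autoref{subsec:EleneLExpressivity} and \citet{frasca2022understanding} place at or below 3-WL, and 3-WL provably fails on $\texttt{SRG}(16,6,2,2)$. I would state this as the crux and give the direct ego-network-isomorphism induction as the self-contained justification, relegating the 3-WL comparison to a remark.
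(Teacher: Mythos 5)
Your proposal is correct and follows essentially the same route as the paper: the separating instance is the same $4\times4$ Rook vs.\ Shrikhande pair, with the $6$-versus-$5$ count of edges in $\mathcal{E}^{k}_{\langle u,v\rangle}$ as the Edge-Centric witness, and the Node-Centric failure reduced to \autoref{th:EleneMoreExpressive} and \autoref{co:EleneSRGLimit}, while the ``at least as expressive'' half is delegated to \autoref{prop:EleneLEdge} exactly as the paper does. The one place you go beyond the paper is the invariant-preservation induction showing that iterating the \textbf{ND} update over many layers cannot break the symmetry between the two graphs---the paper justifies this step only by citing the static single-encoding results of \autoref{th:EleneAsExpressive} and \autoref{th:EleneMoreExpressive}, so your explicit multi-layer argument closes a step the paper leaves implicit and is worth keeping.
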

\begin{proof}
    There is at least a pair of non-isomorphic $\texttt{SRG}$s that \textsc{Elene-L} (\textbf{ED}) can distinguish. In~\autoref{subsec:EleneIntuition}, we show that the Shrikhande and $4\times4$ Rook graphs~\citep{arvind2020,balcilar2021breaking} (parametrized as $\texttt{SRG}(16, 6, 2, 2)$) can be distinguished by Edge-Centric counts that \textsc{Elene-L} (\textbf{ED}) captures despite being undistinguishable by 3-WL (by implementing~\autoref{eq:ELENEEncRaw} at the \emph{edge} level). Following from~\autoref{th:EleneAsExpressive} and \autoref{th:EleneMoreExpressive}, both graphs are indistinguishable by $\textsc{Elene}$ (\textbf{ND}) or $\textsc{Elene-L}$ (\textbf{ND}), as well as sub-graph GNNs like GNN-AK or SUN.  
 
    Intuitively, \texttt{SRG}s are indistinguishable with node-centric $k$ ego-network sub-graph encodings when $k \in \{1, 2\}$ since all nodes produce identical representations, as shown in~\autoref{th:EleneMoreExpressive}. However, the graphs can be distinguished by edge-level information as per~\autoref{eq:ELENE-L-Edge}, as the intersection of $k$-depth ego-networks for $\langle v_1, v_2 \rangle$ differ in edge counts between both \texttt{SRG}s---as observed in~\autoref{sec:EleneDef}. 
    
    We can see the $4\times4$ Rook Graph has 6 edges (i.e. $|\mathcal{E}^k_{\langle v_1, v_2 \rangle})| = 6$) while the Shrikhande graph has $|\mathcal{E}^k_{\langle v_1, v_2 \rangle})| = 5$, hence the graphs are distinguishable by \textsc{Elene-L} (\textbf{ED}), but not \textsc{Elene-L} (\textbf{ND}) or \textsc{Elene} (\textbf{ND}).
\end{proof}

\subsection{Linking ELENE and Shortest Path Neural Networks.}\label{subsec:EleneLSPNNNExpressivity}

\begin{remark}\label{th:SPNNsGraphormers}
    A Graphormer with max. shortest path length $M$ and global readout is an instance Shortest Path Neural Networks (SPNNs) with $k = M - 1$ depth~\citep{abboud2022spnn}.
\end{remark}
\begin{theorem}\label{th:EleneLSPNNExpressive}
    \textsc{Elene-L} (\textbf{ND}) is as expressive as Shortest Path Neural Networks (SPNNs), and transitively, Graphormers.
\end{theorem}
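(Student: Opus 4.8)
The plan is to show that any SPNN can be simulated by a node-centric \textsc{Elene-L} model, and conversely that \textsc{Elene-L} (\textbf{ND}) is not strictly weaker, so that the two are equiexpressive; the Graphormer claim then follows from~\autoref{th:SPNNsGraphormers}. I would first recall the defining update of an SPNN: at each layer, a node $v$ aggregates messages from \emph{all} other nodes grouped by their shortest-path distance $l_G(v,u)$, with a separate learnable function per distance class up to depth $k = M-1$, followed by a global readout. The key observation is that \textsc{Elene-L} with ego-network depth $k$ equal to the SPNN depth has access, inside $\mathcal{S}^k_v$, to exactly the set of nodes at distance $\le k$ from $v$, \emph{together with} the distance label $l_{\mathcal{S}}(u,v)$ carried in the quadruplet $q_u$ and embedded by $\texttt{Emb}(u|v)$ via the $\textbf{W}^{\cdot,\texttt{nd}}$ matrices. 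Since $l_{\mathcal{S}}(u,v) = l_G(u,v)$ whenever $l_G(u,v)\le k$ (the induced ego-network preserves shortest paths up to its radius), the distance partition used by the SPNN is recoverable from \textsc{Elene-L}'s node representations.

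The core of the argument is then a layer-by-layer simulation. I would choose $\Phi_{\texttt{nd}}$ so that $\Phi_{\texttt{nd}}^t(u|v)$ reproduces the SPNN's per-distance message: the input to $\Phi_{\texttt{nd}}$ is $\textbf{x}_v^t \concat \textbf{x}_u^t \concat \texttt{Emb}(u|v)$, and since $\texttt{Emb}(u|v)$ encodes $l_{\mathcal{S}}(u,v)$, a sufficiently expressive DNN $\Phi_{\texttt{nd}}$ can select the distance-$l$ branch of the SPNN and apply the corresponding transformation to $\textbf{x}_u^t$. I would then set $\gamma_{\texttt{ED}} = 0$ and discard $\Phi_{\texttt{ed}}$ (as in the proof of the preceding theorem, edge-level terms can be zeroed out), so that only the node aggregation $\sum_{u}^{\mathcal{V}^k_v}\Phi_{\texttt{nd}}^t(u|v)$ in~\autoref{eq:ELENE-L-NodeMessage} survives; with the sum as pooling operator this matches the SPNN's distance-wise aggregation (which is also a sum). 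Choosing $\Phi_{\texttt{ND}}$ to combine $\textbf{x}_v^t$ with this aggregate reproduces one SPNN layer via the residual update~\autoref{eq:ELENE-L}; iterating over $t$ layers reproduces the full SPNN, and the downstream MP-GNN / global readout reproduces the SPNN readout. The converse direction (\textsc{Elene-L} (\textbf{ND}) is at least as expressive as SPNN, hence no loss) is the easy half and in fact "as expressive as" only requires the simulation I have just sketched plus the observation that SPNNs subsume the relevant baseline; I would state it as equiexpressivity restricted to the distance information SPNNs use, matching the remark's phrasing.

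The main obstacle is handling the mismatch in \emph{which nodes are visible}: an SPNN at depth $k$ still sees nodes at distance exactly $k$ and, after the global readout, effectively sees the whole graph, whereas a single \textsc{Elene-L} layer only sees $\mathcal{V}^k_v$. I expect the resolution is exactly the one used by~\citet{abboud2022spnn} themselves: the global readout is what lets SPNNs reach beyond radius $k$, and \textsc{Elene-L} is used as a preprocessing/augmentation layer feeding an MP-GNN whose own readout supplies the global aggregation — so I would make explicit that the equivalence is between the \emph{local distance-aware aggregation modules}, with the global readout bolted on identically in both cases. A secondary subtlety is that \textsc{Elene-L} embeds relative degrees $d^{(\pm1)}_{\mathcal{S}}$ in addition to distance; this is harmless for the simulation direction (the DNN can ignore those coordinates) but I should note it is precisely the extra signal that makes \textsc{Elene-L} (\textbf{ED}) strictly stronger, cleanly separating this theorem from~\autoref{th:EleneLMoreExpressive}. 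Finally I would double-check the depth bookkeeping ($k = M-1$ from~\autoref{th:SPNNsGraphormers}) so the Graphormer corollary is stated with the correct parameter.
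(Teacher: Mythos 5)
Your proposal is correct and follows essentially the same route as the paper: drop the edge-level term, use the distance label carried by $\texttt{Emb}(u|v)$ to let $\Phi_{\texttt{nd}}$ select the per-distance coefficient $\alpha_i$ applied to $\textbf{x}_u^t$, sum-pool over $\mathcal{V}^k_v$, and compose $\Phi_{\texttt{ND}}$ with $\Phi_{\texttt{sp}}$ to recover the SPNN update, with the Graphormer claim following from the remark. Your worry about node visibility is moot since the SPNN update in \autoref{eq:SPNNDefinition} only aggregates over distances $1,\dots,k$, which is exactly $\mathcal{V}^k_v$.
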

\begin{proof}
    Let $\mathcal{L}_G^{k}(v) = \{u|u \in V \wedge l_G(u, v) = k\}$ be the nodes in $G$ exactly at distance $k$ of $v$. In~\cite{abboud2022spnn}, a $k$-depth SPNN updates the hidden state of node $v$ an aggregation over the $1, ..., k$ exact-distance neighbourhoods: 
    \begin{align}\label{eq:SPNNDefinition}
        \textbf{x}_v^{t + 1} = \Phi_{\texttt{sp}}\Big((1 + \epsilon) \cdot \textbf{x}_v^{t} + \sum_{i=1}^{k} \alpha_i \sum_{u \in \mathcal{L}_G^{i}(v)} \textbf{x}_u^{t}\Big).
    \end{align}
    We show that \textsc{Elene-L} (\textbf{ND}) can implement SPNNs. First, let $\gamma_{\texttt{nd}} = 1$ in~\autoref{eq:ELENE-L}, such that: 
    \begin{align*}
        \textbf{x}_v^{t + 1} &= \Phi_{\texttt{ND}}^{t}(v) = 
        \Phi_{\texttt{ND}}\Bigg(\textbf{x}_v^{t} \biggconcat \sum_{u}^{\mathcal{V}^{k}_v} \Phi_{\texttt{nd}}^{t}(u|v) \biggconcat  \sum_{\langle u, w\rangle}^{\mathcal{E}^{k}_v} \Phi_{\texttt{ed}}^{t}(u, w|v)\Bigg).
    \end{align*}
    We drop $\Phi_{\texttt{ed}}^{t}(u, w|v)$ as SPNNs ignore edge-level signals\footnote{Including edge-level signals may bring \textsc{Elene-L} (\textbf{ED}) to parity with Pure Graph Transformers~\citep{kim2022pure}. We do not explore this connection.}. Let $\Phi_{\texttt{ND}}(\cdot)$ be composed of two functions $\Phi_{\texttt{sp}}(g(\cdot))$\footnote{$g(\cdot)$ is a linear combination over concatenated input vectors, learnable by a first layer of $\Phi_{\texttt{out}}$ without activations.} where:
    \begin{align*}
        g\Big(\textbf{x}_v^{t} \bigconcat \sum_{u}^{\mathcal{V}^{k}_v} \Phi_{\texttt{nd}}^{t}(u|v)\Big) = \Big((1 + \epsilon) \cdot \textbf{x}_v^{t} + \sum_{u}^{\mathcal{V}^{k}_v} \Phi_{\texttt{nd}}^{t}(u|v)\Big).
    \end{align*}
    We replace $\Phi_{\texttt{ND}}$ by $\Phi_{\texttt{sp}}$ and $g(\cdot)$, and expand $\Phi_{\texttt{nd}}^{t}(u|v)$: 
    \begin{align*}
        \textbf{x}_v^{t + 1} = \Phi_{\texttt{sp}}\Big((1 + \epsilon) \cdot \textbf{x}_v^{t} + \sum_{u}^{\mathcal{V}^{k}_v} \Phi_{\texttt{nd}}\Big(\textbf{x}_v^{t} \bigconcat \textbf{x}_u^{t} \bigconcat \texttt{Emb}(u|v)\Big).
    \end{align*}
    We then instantiate $\Phi_{\texttt{nd}}(\cdot)$ as:
    \begin{align*}
        \Phi_{\texttt{nd}}(\cdot) = \sum_i^k \alpha_i \cdot \texttt{if}[i = l_{\mathcal{S}}(u, v)] \cdot \textbf{x}_u^{t}
    \end{align*}
    \texttt{if}$[\cdot]$ can be implemented through the distance and degree signals in $\texttt{Emb}$, such we can check if the node distance matches a specific value\footnote{This is not necessary during learning: a one-hot `decoder' can be implemented using a two-layer perceptron with ReLU activations.}. Substituting in $\textbf{x}_v^{t + 1}$ above yields:
    \begin{align*}
        \textbf{x}_v^{t + 1} = \Phi_{\texttt{sp}}\Big((1 + \epsilon) \cdot \textbf{x}_v^{t} + \sum_{u}^{\mathcal{V}^{k}_v} \sum_i^k \alpha_i \cdot \texttt{if}[i = l_{\mathcal{S}}(u, v)] \cdot \textbf{x}_u^{t}\Big),
    \end{align*}
    which is equivalent to~\autoref{eq:SPNNDefinition}, and shows \textsc{Elene-L} (\textbf{ND}) can learn like SPNNs---and, transitively through~\autoref{th:SPNNsGraphormers}, that Graphormers can be emulated by \textsc{Elene-L} (\textbf{ND}).
\end{proof}

\section{Experimental Results}\label{sec:EleneExperiments}
We now study the effect of introducing \textsc{Elene} and \mbox{\textsc{Elene-L}} in a variety of graph-level settings, evaluating where \emph{purely structural} \textsc{Elene} encodings underperform \textsc{Elene-L}, and the practical impact of \textsc{Elene} variants in terms of model performance, training time, and memory costs. We describe our experimental protocol in~\autoref{subsec:EleneExpProtocol} and provide reproducible code, hyper-parameters, and analysis scripts through Github\footnote{\url{https://github.com/nur-ag/ELENE}} for four experimental benchmarks:

\textbf{A) Expressivity}. Evaluates whether models distinguish non-isomorphic graphs (on 1-WL EXP~\citep{abboud2021} and 3-WL SR25~\citep{balcilar2021breaking} equiv. datasets), count sub-graphs (in RandomGraph~\citep{chen2022CanGNNsCountSubstructures}), and evaluate graph-level properties~\citep{corso2020PNAGraphProp}.
    
\textbf{B) Proximity}. Measures whether models learn long-distance \emph{attributed} node relationships in $h$-Proximity datasets~\citep{abboud2022spnn}.
    
\textbf{C) Real World Graphs}. Evaluates performance on five large-scale graph classification/regression datasets from Benchmarking GNNs (ZINC, CIFAR10, PATTERN)~\citep{dwivedi2020benchmarkgnns}, and the Open Graph Benchmark (MolHIV, MolPCBA)~\citep{hu2020ogb}.

\textbf{D) Memory Scalability}. Evaluates the memory consumption of \textsc{Elene-L} on $d$-regular graphs, varying $n$ and $d_{\texttt{max}}$ to validate the algorithmic complexity analysis in~\autoref{subsec:ELENEAlgorithmicAnalysis} and comparing with the memory consumption of GIN-AK, GIN-AK\textsuperscript{+} and SPEN~\citep{mitton2023subgraph}.

\subsection{Experimental Protocol}\label{subsec:EleneExpProtocol}

\textbf{Reporting.} When reported in the original studies, we show stddevs for experiments with more than two runs following~\citep{zhao2022}, and highlight best-performing models per task in \textbf{\underline{underlined bold}}. \textsc{Elene} denotes~\autoref{eq:ELENESparseVector} as additional features, while \textsc{Elene-L} denotes the representations of~\autoref{eq:ELENE-L} and~\autoref{eq:ELENE-L-Edge}. \textbf{(ED)} denotes \textsc{Elene-L} with Edge-Centric signals, while \textbf{(ND)} denotes a Node-Centric variant that ignores edge information for ablation studies. `\textsuperscript{$\dagger$}' indicates results from the literature.

\textbf{Environment.} Experiments ran on a shared server with a 48GB Quadro RTX 8000 GPU, 40 CPU cores and 502GB RAM. Each individual job has a limit of 96GB RAM and 8 CPU cores. To measure memory and time costs without sharing resources, we also reproduced our experiments on real-world graphs on a SLURM cluster with nodes equipped with 22GB Quadro GPUs. Finally, scalability experiments ran on Tesla T4 GPUs with 15.11GB of VRAM to validate our approach on consumer hardware.

\textbf{Experimental Setup.} We explore sub-sets of \textsc{Elene} hyper-parameters via grid search with $k \in \{0, 1, 2, 3, 5\}$ parameter ranges for \textsc{Elene} and \textsc{Elene-L}, and test the ED/ND variants for \textsc{Elene-L} with embedding params. $\omega \in \{16, 32, 64\}$, $\rho = d_{\texttt{max}}$, using masked-mean pooling for stability. For $h$-Proximity~\citep{abboud2022spnn}, we compare against SPNNs~\citep{abboud2021} and Graphormer\citep{ying2021do} as originally reported. For Expressivity and Real World Graphs, we reuse hyper-parameters and splits from GIN-AK\textsuperscript{+} in~\citet{zhao2022} without architecture search, comparing against strong MP-GNN baselines from literature where GNN-AK\textsuperscript{+} underperforms: CIN~\citep{bodnar2021} for ZINC and SUN~\citep{frasca2022understanding} for sub-graph counting. We choose GINE~\citep{hu2020pretraining}, an edge-aware variant of GIN~\citep{xu2018how}, as our base MP-GNN given that GIN-AK\textsuperscript{+} outperforms its uplifted counterparts for GCN-AK\textsuperscript{+} and PNA-AK\textsuperscript{+}~\citep{DBLP:journals/corr/KipfW16-SemiSup-GCN,corso2020PNAGraphProp,zhao2022}, without running into out-of-memory issues like PPGN~\citep{HaggaiNIPS2019PPGN} in the PPGN-AK instantiation. Finally, for scalability we compare with GNN-AK on benchmark datasets~\cite{zhao2022} and SPEN~\cite{mitton2023subgraph}.

\textbf{Experimental Objectives.} We connect \emph{expressivity} and its relation to graph \emph{attributes}, comparing against methods that \emph{do not} perturb graph structure, e.g. DropGNN~\citep{papp2021dropgnn}; leverage random walks, e.g. RWPE~\citep{dwivedi2022graph}; or require costly pre-processing e.g. $\mathcal{O}(n^3)$ spectral eigendecompositions, such as GNNML3, LWPE, or GraphGPS~\citep{balcilar2021breaking,dwivedi2022graph,rampasek2022GPS}. Per~\autoref{subsec:EleneLSPNNNExpressivity}, \textsc{Elene} relates to Graphormers via SPNNs, so we focus on sub-graph GNNs and SPNNs.

\subsection{Expressivity}\label{subsec:EleneExpressivityResults}

We test \textsc{Elene} on four MP-GNN expressivity datasets, with results captured in~\autoref{tab:ExpressivityResults}. Introducing \textsc{Elene} signals improves the performance of GINs, and our single-run results EXP and SR25 are consistent with our formal analysis on~\autoref{sec:EleneExpressivity}---namely, \textsc{Elene} and \textsc{Elene-L} (\textbf{ND}) and (\textbf{ED}) all reach 100\% accuracy on the 1-WL equivalent EXP task, as expected from~\autoref{th:EleneAsExpressive}. Furthermore, \textsc{Elene-L} (\textbf{ED}) can distinguish all 3-WL equivalent \texttt{SRG}s in the challenging SR25 dataset---providing empirical evidence for~\autoref{th:EleneLMoreExpressive}.

\begin{table}[!ht]
\caption[Expressivity benchmark results.]{Expressivity benchmark results. In EXP and SR25, introducing \textsc{Elene-L} yields the best performance per task, shown in \textbf{\underline{underlined bold}}. We highlight the \emph{best-performing configurations from \textsc{Elene} variants on GIN in italics}, which we consistently observe in the Node-Centric (\textbf{ND}) configuration except for isomorphism tasks.}
\label{tab:ExpressivityResults}
\smaller
\centering
\begin{tabular}{@{}rrrrrrrrrr@{}}
\toprule
\multicolumn{1}{c}{\multirow{2}{*}{\textbf{Model}}}                                                                           & \multicolumn{1}{c}{\multirow{2}{*}{\textbf{\begin{tabular}[c]{@{}c@{}}EXP\\ (Acc.)\end{tabular}}}} & \multicolumn{1}{c}{\multirow{2}{*}{\textbf{\begin{tabular}[c]{@{}c@{}}SR25\\ (Acc.)\end{tabular}}}} & \multicolumn{4}{c}{\textbf{\begin{tabular}[c]{@{}c@{}}Count. Substr.\\ (MAE)\end{tabular}}}                                                             & \multicolumn{3}{c}{\textbf{\begin{tabular}[c]{@{}c@{}}Graph Prop. \\ ($\log_{10}$(MAE))\end{tabular}}}          \\
\multicolumn{1}{c}{}                                                                                                          & \multicolumn{1}{c}{}                                                                               & \multicolumn{1}{c}{}                                                                                & \multicolumn{1}{c}{\textbf{Tri.}} & \multicolumn{1}{c}{\textbf{Tail Tri.}} & \multicolumn{1}{c}{\textbf{Star}} & \multicolumn{1}{c}{\textbf{4-Cycle}} & \multicolumn{1}{c}{\textbf{IsCon.}} & \multicolumn{1}{c}{\textbf{Diam.}} & \multicolumn{1}{c}{\textbf{Radius}} \\ \midrule
\textbf{GIN}                                                                                                                  & 50\%                                                                                               & 6.67\%                                                                                              & 0.357                             & 0.253                                    & 0.023                             & 0.231                                & -1.914                               & -3.356                             & -4.823                              \\ \midrule
\textbf{SUN\textsuperscript{$\dagger$,\citep{frasca2022understanding}}}                                                        & \multicolumn{1}{c}{---}                                                                            & \multicolumn{1}{c}{---}                                                                             & {\ul \textbf{0.008}}              & {\ul \textbf{0.008}}                     & {\ul \textbf{0.006}}              & {\ul \textbf{0.011}}                 & -2.065                               & -3.674                             & {\ul \textbf{-5.636}}               \\
\textbf{GIN-AK\textsuperscript{$\dagger$,\citep{zhao2022}}}                                                                    & {\ul \textbf{100\%}}                                                                               & 6.67\%                                                                                              & 0.093                             & 0.075                                   & 0.017                             & 0.073                                & -1.993                               & -3.757                             & -5.010                              \\
\textbf{GIN-AK\textsuperscript{+}}                                                                                            & {\ul \textbf{100\%}}                                                                               & 6.67\%                                                                                              & 0.011                             & 0.010                                    & 0.016                             & {\ul \textbf{0.011}}                 & -2.512                               & -3.917                             & -5.260                              \\ \midrule
\textbf{GIN+ELENE}                                                                                                   & {\ul \textbf{100\%}}                                                                               & 6.67\%                                                                                              & 0.024                             & 0.023                                    & 0.020                             & 0.041                                & -2.218                               & -3.656                             & -5.024                              \\
\textbf{GIN+ELENE-L (\textbf{ND})}                                                                                   & {\ul \textbf{100\%}}                                                                               & 6.67\%                                                                                              & \textit{0.012}                    & \textit{0.015}                           & \textit{0.014}                    & \textit{0.016}                       & \textit{-2.620}                      & \textit{-3.815}                    & \textit{-5.117}                     \\
\textbf{GIN+ELENE-L (\textbf{ED})}                                                                                    & {\ul \textbf{100\%}}                                                                               & {\ul \textbf{100\%}}                                                                                & 0.023                             & 0.023                                    & 0.017                             & 0.023                                & -2.497                               & -3.541                             & -4.755                              \\ \midrule
\textbf{\begin{tabular}[c]{@{}r@{}}Best (GIN / GIN-AK) + \\ (ELENE / ELENE-L)\end{tabular}} & {\ul \textbf{100\%}}                                                                               & {\ul \textbf{100\%}}                                                                                               & 0.010                             & 0.010                                    & 0.014                             & {\ul \textbf{0.011}}                 & {\ul \textbf{-2.715}}                & {\ul \textbf{-4.072}}              & -5.267                              \\ \bottomrule
\end{tabular}
\end{table}

On Graph Properties and Counting Substructures (2 runs averaged, as in~\citet{zhao2022}), a GIN + \textsc{Elene-L} (\textbf{ND}) model consistently outperforms GIN-AK \emph{without context encoding}. In Counting, both \textsc{Elene} variants and GIN-AK\textsuperscript{+} are outperformed by SUN, but GIN+\textsc{Elene} matches or outperforms GIN on every task, showing that \textsc{Elene} features are informative and can boost performance by themselves. 

On both tasks, we find that GIN+\textsc{Elene-L} (\textbf{ED}) performs poorly---outperforming GIN+\textsc{Elene} but not our baselines. This might be caused by model over-parametrization, as six node and edge-level embedding matrices are learned for $3$ and $6$ layers on Counting Substructures and Graph Properties respectively\footnote{Weight sharing may help over-parametrization by learning a single structural representation, trading off expressivity.}. Finally, on the Graph Properties tasks of \texttt{IsConnected} and \texttt{Diameter}, a GIN-AK\textsuperscript{+} with \textsc{Elene-L} outperforms state-of-the-art results---and interestingly a GIN with \textsc{Elene-L} (\textbf{ND}) outperforms all existing baselines on the \texttt{IsConnected} task. This can be further improved by using a GIN-AK\textsuperscript{+} with \textsc{Elene-L} (ND). 

\subsection{\texorpdfstring{$h$}{h}-Proximity}\label{subsec:EleneProximityResults}

We evaluate \textsc{Elene-L} on $h$-Proximity~\citep{abboud2022spnn} tasks (10-fold averaged)---where nodes are assigned colors including red and blue, and models classify whether all red nodes have at most two blue nodes within $h$ hops (positive) or otherwise (negative), as in~\autoref{fig:hProximity}. Models must learn which colors are relevant for the target and capture long-ranging dependencies during learning. Edge information is irrelevant, and pre-computed encodings like \textsc{Elene} cannot capture interactions of distances and node attributes.

In~\citet{abboud2022spnn}, the authors reported that MP-GNNs perform well on $h=1$-Proximity, so we focus on the $h \in \{3, 5, 8, 10\}$ variants. \autoref{tab:hProximityResults} shows our results, where \textsc{Elene-L} (\textbf{ND}) outperforms strong baselines from SPNNs and Graphormer~\citep{abboud2021}. As expected, a GIN + \textsc{Elene} did not meaningfully improve over GIN. Our numerical results provide empirical validation for~\autoref{th:EleneLSPNNExpressive}.

\begin{table}[!ht]
\caption[$h$-Proximity binary classification results.]{$h$-Proximity binary classification results (accuracy). \textsc{Elene-L} \textbf{(ND)} without degree information outperforms baselines strong SPNNs and Graphormer baselines from\textsuperscript{$\dagger$}~\citet{abboud2021}.}\label{tab:hProximityResults}
\small
\centering
{%
\begin{tabular}{@{}r@{~}rrrr@{}}
\toprule
                              & \multicolumn{1}{c}{\textbf{3-Prox.}} & \multicolumn{1}{c}{\textbf{5-Prox.}} & \multicolumn{1}{c}{\textbf{8-Prox.}} & \multicolumn{1}{c}{\textbf{10-Prox.}} \\ \midrule
\textbf{GCN}\textsuperscript{$\dagger$}                  & $50.0 \pm 0.0$                           & $50.0 \pm 0.0$                           & $50.1 \pm 0.0$                           & $49.9 \pm 0.0$                            \\
\textbf{GAT}\textsuperscript{$\dagger$}                  & $50.4 \pm 1.0$                           & $49.9 \pm 0.0$                           & $50.0 \pm 0.0$                           & $50.0 \pm 0.0$                            \\ \midrule
\textbf{SPNN ($k=1$)}\textsuperscript{$\dagger$}         & $50.5 \pm 0.7$                           & $50.2 \pm 1.0$                           & $50.0 \pm 0.9$                           & $49.8 \pm 0.8$                            \\
\textbf{SPNN ($k=5$)}\textsuperscript{$\dagger$}    & $95.5 \pm 1.6$                           & $96.8 \pm 0.7$                           & $96.8 \pm 0.6$                           & $96.8 \pm 0.6$                            \\
\textbf{Graphormer}\textsuperscript{$\dagger$}           & $94.7 \pm 2.7$                           & $95.1 \pm 1.8$                           & $97.3 \pm 1.4$                           & $96.8 \pm 2.1$                            \\ \midrule
\textbf{GIN+ELENE} & $52.0 \pm 2.0$ & $51.8 \pm 1.2$ & $52.4 \pm 2.6$ & $51.4 \pm 1.1$ \\
\textbf{GIN+ELENE-L (\textbf{ND})} & \underline{$\mathbf{98.3 \pm 0.5}$}       & \underline{$\mathbf{98.6 \pm 0.5}$}       & \underline{$\mathbf{99.0 \pm 0.5}$}       & \underline{$\mathbf{99.2 \pm 0.3}$}        \\ \bottomrule
\end{tabular}%
}
\end{table}

\subsection{Real World Graphs}\label{subsec:EleneLargeBenchmark}

We also evaluate \textsc{Elene} and \textsc{Elene-L} on five real-world, large-scale graph classification and regression tasks. We test \textsc{Elene} and \textsc{Elene-L} on ZINC, MolHIV, PATTERN, CIFAR10, and MolPCBA and report our results in~\autoref{tab:RealWorldBenchmarkResults}. Given increased memory and computation costs and the weaker performance of \textsc{Elene-L} (\textbf{ED}) in~\autoref{subsec:EleneExpressivityResults}, we only evaluate \textsc{Elene-L} (\textbf{ND}). 

On ZINC, GIN + \textsc{Elene-L} (3 averaged runs) achieves comparable results to existing baselines, including SUN~\citep{frasca2022understanding}. Furthermore, by introducing \textsc{Elene-L} on GIN-AK\textsuperscript{+}, the model matches the previous strong baseline achieved by CIN~\citep{bodnar2021}. On PATTERN (3 averaged runs), GIN + \textsc{Elene-L} achieves comparable results to GIN-AK\textsuperscript{+}, but does not meet the best reported performance of GCN-AK\textsuperscript{+} by a 0.07\% delta. We do not achieve to independently reproduce GIN-AK\textsuperscript{+} results~\citep{zhao2022} on MolHIV (5 averaged runs)---finding that GIN with \textsc{Elene} or \textsc{Elene-L} do not have statistically significant ($p < 0.01$) differences with GIN, while the performance of GIN-AK\textsuperscript{+} is statistically inferior.

\begin{table}[!t]
\caption[Results on real world benchmark datasets.]{Results on real world benchmark datasets. We compare with published results and reproduce the experiments of~\cite{zhao2022}. Adding \textsc{Elene} variants to GIN and GIN-AK\textsuperscript{+} yield state-of-the-art results on ZINC and MolPCBA, and match the performance of existing methods in PATTERN and MolHIV.}
\label{tab:RealWorldBenchmarkResults}
\centering
\small
\begin{tabular}{@{}rrrrrr@{}}
\toprule
\multicolumn{1}{l}{}                                                                                                                              & \multicolumn{1}{c}{\textbf{\begin{tabular}[c]{@{}c@{}}ZINC\\ {\scriptsize(MAE)}\end{tabular}}}                        & \multicolumn{1}{c}{\textbf{\begin{tabular}[c]{@{}c@{}}PATTERN\\ {\scriptsize(Acc.)}\end{tabular}}}                      & \multicolumn{1}{c}{\textbf{\begin{tabular}[c]{@{}c@{}}MolHIV \\ {\scriptsize(ROC)}\end{tabular}}}                   & \multicolumn{1}{c}{\textbf{\begin{tabular}[c]{@{}c@{}}CIFAR10 \\ {\scriptsize(Acc.)}\end{tabular}}}                   & \multicolumn{1}{c}{\textbf{\begin{tabular}[c]{@{}c@{}}MolPCBA \\ {\scriptsize(AP)}\end{tabular}}}                   \\ \midrule
\textbf{GSN\textsuperscript{$\dagger$}}                                                                                      & $0.115 \pm 0.012$                                                                                               & \multicolumn{1}{c}{---}                                                                                           & $77.99 \pm 1.00$                                                                                              & \multicolumn{1}{c}{---} & \multicolumn{1}{c}{---} \\
\textbf{NGNN\textsuperscript{$\dagger$}}                                                                                         & \multicolumn{1}{c}{---}                                                                                         & \multicolumn{1}{c}{---}                                                                                           & $78.34 \pm 1.86$                                                                                              & \multicolumn{1}{c}{---} & $28.32 \pm 0.41$ \\
\textbf{CIN\textsuperscript{$\dagger$}}                                                                                         & $0.079 \pm 0.006$                                                                                               & \multicolumn{1}{c}{---}                                                                                           & {\ul \textbf{$\mathbf{80.94 \pm 0.57}$}}                                                                      & \multicolumn{1}{c}{---} & \multicolumn{1}{c}{---} \\
\textbf{SUN\textsuperscript{$\dagger$}}                                                                            & $0.083 \pm 0.003$                                                                                               & \multicolumn{1}{c}{---}                                                                                           & $80.55 \pm 0.55$                                                                                              & \multicolumn{1}{c}{---} & \multicolumn{1}{c}{---} \\ \midrule
\textbf{GCN-AK\textsuperscript{+}\textsuperscript{$\dagger$}}                                                                     & $0.127 \pm 0.004$                                                                                               & {\ul \textbf{$\mathbf{86.887 \pm 0.009}$}}                                                                        & $79.28 \pm 1.01$ & \underline{$\mathbf{72.70 \pm 0.29}$} & $0.285 \pm 0.000$                                                                                           \\
\textbf{GIN-AK\textsuperscript{$\dagger$}} & $0.094 \pm 0.005$ & $86.803 \pm 0.044$ & $78.29 \pm 1.21$ & $67.51 \pm 0.21$ & $0.274 \pm 0.000$ \\
\textbf{\begin{tabular}[c]{@{}r@{}}GIN-AK\textsuperscript{+}\\ {\scriptsize(Lit. results\textsuperscript{$\dagger$})}\end{tabular}} & \begin{tabular}[c]{@{}r@{}}$0.082 \pm 0.003$\\ {\scriptsize\textsuperscript{$\dagger$}$0.080 \pm 0.001$}\end{tabular} & \begin{tabular}[c]{@{}r@{}}$86.868 \pm 0.028$\\ {\scriptsize\textsuperscript{$\dagger$}$86.850 \pm 0.057$}\end{tabular} & \begin{tabular}[c]{@{}r@{}}$77.37 \pm 0.31$\\ {\scriptsize\textsuperscript{$\dagger$}$79.61 \pm 1.19$}\end{tabular} & \begin{tabular}[c]{@{}r@{}}$72.39 \pm 0.38$ \\ {\scriptsize\textsuperscript{$\dagger$}$72.19 \pm 0.13$} \end{tabular} & \begin{tabular}[c]{@{}r@{}}$0.293 \pm 0.004$ \\ {\scriptsize\textsuperscript{$\dagger$}$0.293 \pm 0.004$} \end{tabular} \\ \midrule
\textbf{\begin{tabular}[c]{@{}r@{}}GIN\\ {\scriptsize(Lit. results\textsuperscript{$\dagger$})}\end{tabular}}                       & \begin{tabular}[c]{@{}r@{}}$0.155 \pm 0.005$\\ {\scriptsize\textsuperscript{$\dagger$}$0.163 \pm 0.004$}\end{tabular} & \begin{tabular}[c]{@{}r@{}}$85.692 \pm 0.042$\\ {\scriptsize\textsuperscript{$\dagger$}$85.732 \pm 0.023$}\end{tabular} & \begin{tabular}[c]{@{}r@{}}$78.72 \pm 0.54$\\ {\scriptsize\textsuperscript{$\dagger$}$78.81 \pm 1.01$}\end{tabular} & \begin{tabular}[c]{@{}r@{}}$59.55 \pm 0.54$\\ {\scriptsize\textsuperscript{$\dagger$}$59.82 \pm 0.33$}\end{tabular} & \begin{tabular}[c]{@{}r@{}}$0.271 \pm 0.003$\\ {\scriptsize\textsuperscript{$\dagger$}$0.268 \pm 0.001$}\end{tabular} \\ \midrule

\textbf{GIN+IGEL}                                                                            & $0.103 \pm 0.004$                                                                                               & $86.762 \pm 0.029$                                                                                           & $78.92 \pm 0.92$                                                                                              & \multicolumn{1}{c}{---} & \multicolumn{1}{c}{---} \\ \midrule
\textbf{GIN+ELENE}                                                                                                                       & $0.092 \pm 0.001$                                                                                               & $86.783 \pm 0.044$                                                                                                & $78.92 \pm 0.35$ & $56.34 \pm 0.06$ & $0.277 \pm 0.002$                                                                                             \\
\textbf{GIN+ELENE-L (\textbf{ND})}                                                                                                                     & $0.083 \pm 0.004$                                                                                               & $86.828 \pm 0.002$                                                                                                & $78.26 \pm 0.93$ & $68.95 \pm 0.25$ & \underline{$\mathbf{0.294 \pm 0.001}$}                                                                                             \\ \midrule
\textbf{\scriptsize\begin{tabular}[c]{@{}r@{}}Best Result\\ (ELENE / ELENE-L)\end{tabular}}                                & {\ul \textbf{$\mathbf{0.079 \pm 0.003}$}}                                                                       & $86.828 \pm 0.002$                                                                                                & $79.15 \pm 1.45$ & $68.95 \pm 0.25$ & \underline{$\mathbf{0.294 \pm 0.001}$} \\ \bottomrule
\end{tabular}
\end{table}

\autoref{tab:MemoryTimePerformance} shows time and memory costs of \textsc{Elene} compared to state-of-the-art methods.  Despite not tuning hyperparameters, a GIN+\textsc{Elene-L} model outperforms GIN-AK in CIFAR and a strong GIN-AK\textsuperscript{+} baseline in MolPCBA. Furthermore, our setup of GIN layers combined with \textsc{Elene} always outperforms GNN-AK\textsuperscript{+} in terms of memory consumption. In ZINC, GIN+\textsc{Elene-L} (\textbf{ND}) requires 0.99GB compared to the 1.68GB of GIN-AK\textsuperscript{+} while reaching comparable performance ($0.083 \pm 0.004$ vs $0.082 \pm 0.003$, respectively). In MolHIV, GIN+\textsc{Elene} model requires only 70MB during training while outperforming the ROC of our reproduced run of GIN-AK\textsuperscript{+}, which required 790MB---an 11.3-fold reduction in memory usage. On PATTERN, we find that GIN+\textsc{Elene-L} (\textbf{ND}) achieves 99.95\% of the performance of GIN-AK\textsuperscript{+} while consuming only 7.8GB of memory during training, compared to the 26.52GB reported by~\citet{zhao2022}--- a 3.4-fold reduction. 

In summary, \textsc{Elene} and \textsc{Elene-L} (\textbf{ND}) achieve comparable results to the baselines with favorable time / memory efficiency. \textsc{Elene} encodings used as node-features (\textbf{GIN}+\textsc{Elene}) add minor overhead over GIN and match or outperform \textbf{GIN}+\textsc{Igel} in all tested settings, and GIN-AK in four over five. \textsc{Elene-L} (\textbf{ND}) also shows favorable memory performance versus GIN-AK and GIN-AK\textsuperscript{+} in all setups. Finally, we observe additional memory costs for \textsc{Elene-L} (\textbf{ED}) due to using node and edge embeddings.  

\begin{table}[!ht]
\centering
\caption[Memory and time performance on benchmark datasets.]{Memory and time performance on benchmark datasets, controlling for shared resource use as per~\autoref{subsec:EleneExpProtocol}. We report average epoch duration in seconds (s) and maximum memory consumption in gigabytes (GB) respectively. Dashed entries indicate executions that terminated due to running out of memory.}
\label{tab:MemoryTimePerformance}
\footnotesize
\begin{tabular}{@{}rrrrrrrrrrr@{}}
\toprule
\multicolumn{1}{l}{\multirow{1}{*}{}}       & \multicolumn{2}{c}{\textbf{ZINC}} & \multicolumn{2}{c}{\textbf{PATTERN}} & \multicolumn{2}{c}{\textbf{MolHIV}} & \multicolumn{2}{c}{\textbf{CIFAR10}} & \multicolumn{2}{c}{\textbf{MolPCBA}} \\ \midrule
\textbf{GIN}                                & $6.02$s          & $0.12$GB          & $118.62$s        & $1.42$GB          & $14.88$s         & $0.07$GB         & $98.37$s         & $0.90$GB          & $223.13$s        & $0.44$GB          \\
\textbf{GIN-AK}                             & $9.76$s          & $1.11$GB          & ---              & ---               & $19.30$s         & $0.64$GB         & $283.93$s        & $18.80$GB         & $534.78$s        & $3.80$GB          \\
\textbf{GIN-AK\textsuperscript{+}}          & $13.63$s         & $1.68$GB          & ---              & ---               & $25.47$s         & $0.79$GB         & ---              & ---               & $607.89$s        & $3.83$GB          \\ \midrule
\textbf{GIN}\textbf{+ELENE}                 & $6.14$s          & $0.13$GB          & $90.15$s         & $1.47$GB          & $14.94$s         & $0.07$GB         & $120.21$s        & $0.91$GB          & $278.29$s        & $0.46$GB          \\ 
\textbf{+ELENE-L (\textbf{ND})} & $10.23$s         & $0.99$GB          & $146.15$s        & $7.80$GB          & $42.53$s         & $0.54$GB         & $224.43$s        & $10.72$GB         & $451.12$s        & $2.39$GB          \\ 
\textbf{+ELENE-L (\textbf{ED})} & $22.61$s         & $2.85$GB          & ---              & ---               & $32.28$s         & $1.40$GB         & ---              & ---               & $1025.58$s       & $7.10$GB          \\ \bottomrule
\end{tabular}
\end{table}

\subsection{Memory Scalability}\label{subsec:ELENEMemScalability}

Finally, we evaluate the scalability of \textsc{Elene-L} in a learning setting. 
We analyze the memory consumption as a function of the graph size and how it compares with other methods.
For that, we follow a similar setting as~\cite{mitton2023subgraph} for SPEN: we design and implement a learning task on a large $d$-regular graph, and use it to explore memory consumption under different values of degree $d$ and nodes $n$. 
With this setup, we train the model for $25$ epochs to predict a constant variable so that both input tensors and gradient computations are kept in memory.

We evaluate both \textsc{Elene-L} (\textbf{ND}) and (\textbf{ED}), together with a GIN model without any \textsc{Elene-L} features, GIN-AK and GIN-AK$^{+}$ as baselines. For all \textsc{Elene-L} variants, we execute the benchmark with different values of $k \in \{1, 2, 3\}$.

\begin{figure}[!ht]
  \centering
  \includegraphics[width=1.0\linewidth]{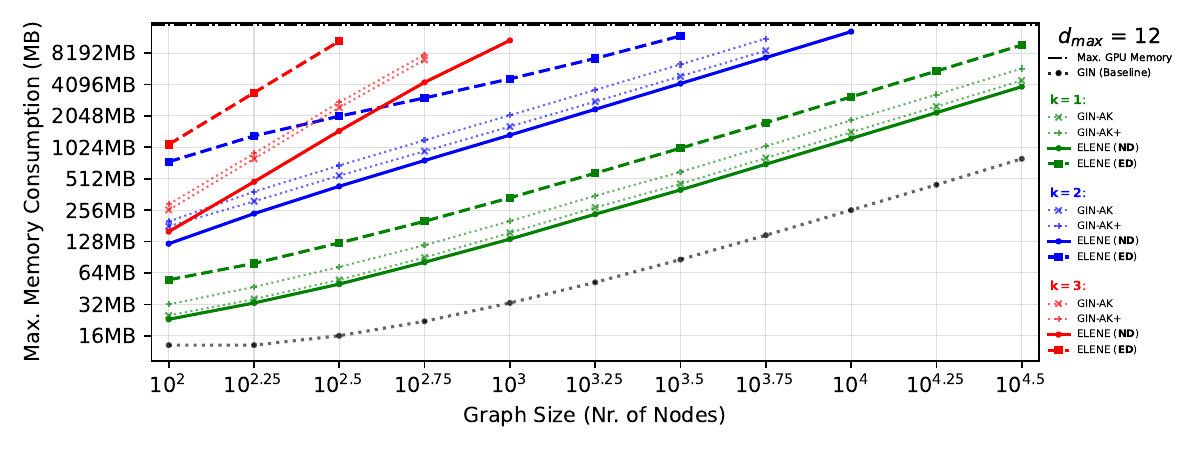}
  \caption[Memory scalability analysis of \textsc{Elene}.]{Memory scalability analysis of \textsc{Elene} when $d_\texttt{max} = 12$. We include GIN (dotted line) and maximum GPU memory (dash-and-dotted line) as indicative lower and upper memory bounds. \textsc{Elene-L} (\textbf{ND}, full lines) outperforms both GIN-AK, GIN-AK$^{+}$ (dotted lines) and \mbox{\textsc{Elene-L}} (\textbf{ED}, dashed lines). Additionally, \textsc{Elene-L} (\textbf{ND}) can encode all $d$-regular graphs in the benchmark when $k=1$. As expected, memory consumption increases linearly with the number of nodes as $d_\texttt{max}$ is kept fixed.}
  \label{fig:ELENEScalabilityPlot}
\end{figure}

\autoref{fig:ELENEScalabilityPlot} shows our results when $d_\texttt{max} = 12$.
As expected, the memory cost grows exponentially as a function of~$n$. We observe that \textsc{Elene-L} (\textbf{ND}) with $k=2$ can scale up to graphs with $10,000$ nodes with ego-network sub-graphs. Since all nodes have the same degree $d_\texttt{max} = 12$, at 2 hops we are guaranteed to find the root node, its 12 neighbors, and at least one additional neighbor at 2 hops ---or 14 total nodes. In practice, as the graphs are randomly generated, we find that each of the $2$-depth subgraphs contains an average of $144.13$ nodes with the expected maximum at $145$. 

Additionally, our experiments show that \textsc{Elene-L} (\textbf{ND}) with $k=1$ can scale up to graphs with $10^{4.5} = 31,623$ nodes with up to $d_\texttt{max} = 18$. Furthermore, despite requiring additional memory, the more expressive \textsc{Elene-L} (\textbf{ED}) can nevertheless be used for $k=1$ for graphs with up to $10^{4.5}$ nodes as well.
Scaling to graphs with more than $10^5$ nodes is possible by increasing the cpu-memory (our limit is 96GB) or alternatively changing the implementation. The latter can be done for example by computing the encodings through parallel BFS, which would result in a slower algorithm (but of the same order of complexity). 
We provide additional results for $d_\texttt{max} = 6$ and $d_\texttt{max} = 18$, as well as different graph density patterns, in~\autoref{sec:EleneDetTrainingSummary}.

Recent methods like SPEN outperform global permutation-equivariant methods like PPGN. However, SPEN still struggles to process significantly smaller graphs, with $n\approx 1,000$ nodes and $k=1$ depth ego-networks that contain 9 nodes, even with higher GPU memory as reported by~\citet{mitton2023subgraph}. 
Compared with the complexity of SPEN, which is $\mathcal{O}(n \cdot {|\mathcal{V}^{k}_v|}^2)$, \textsc{Elene-L} can encode ego-networks with $16$ times more nodes with comparable memory usage---while outperforming sub-graph GNN baselines like GIN-AK and GIN-AK\textsuperscript{+}. 

\subsection{Experiments Summary}

\textsc{Elene} and \textsc{Elene-L} consistently boost GNN performance on the three experimental benchmarks, and are shown to be scalable in~\autoref{tab:MemoryTimePerformance} and~\autoref{fig:ELENEScalabilityPlot}. 

On Expressivity, \autoref{subsec:EleneExpressivityResults} gives empirical support for~\autoref{th:EleneLMoreExpressive}, i.e. that \textsc{Elene-L} (\textbf{ED}) can distinguish \texttt{SRG}s, \emph{achieving 100\% accuracy} on the challenging SR25~\citep{balcilar2021breaking} dataset. Although SUN~\citep{frasca2022understanding} outperforms other models on Counting Substructures, \textsc{Elene} and \textsc{Elene-L} still improve baseline performance and match previous GIN-AK and GIN-AK\textsuperscript{+} baselines respectively. On Graph Properties, GIN+\textsc{Elene-L} matches existing baselines, and a GIN-AK\textsuperscript{+} model with \textsc{Elene-L} \emph{outperforms} previous state-of-the-art results on the \texttt{IsCon.} and \texttt{Diam.} tasks with -2.715 and -4.072 $\log_{10}$(MSE) each.

On $h$-Proximity, \autoref{subsec:EleneProximityResults} validates~\autoref{th:EleneLSPNNExpressive}, i.e., that \textsc{Elene-L} (\textbf{ND}) is at least as expressive as SPNNs~\citep{abboud2022spnn}, as \textsc{Elene-L} (\textbf{ND}) \emph{outperforms SPNNs and Graphormers} at capturing \emph{attributed structures}---that sparse \textsc{Elene} vectors alone \emph{cannot capture}.

On Real World Graphs from~\autoref{subsec:EleneLargeBenchmark}, \textsc{Elene} and \textsc{Elene-L} reach state-of-the-art results. On ZINC, GIN-AK\textsuperscript{+} with \textsc{Elene-L} achieves $0.079 \pm 0.003$ MAE, matching CIN~\citep{bodnar2021}. A GIN+\textsc{Elene-L} matches $99.95$\% of the performance of baselines on PATTERN while \emph{consuming 3.4$\times$ less memory}, and GIN+\textsc{Elene} reaches 0.1\% less accuracy than GIN-AK\textsuperscript{+} but does so using 1.47GB, compared to the 26.54GB reported for GIN-AK\textsuperscript{+}---\emph{a 18.1$\times$ memory reduction}. Finally, a GIN+\textsc{Elene-L} \emph{matches state-of-the-art results on MolPCBA} ($0.294 \pm 0.001$ vs $0.293 \pm 0.003$ of GIN-AK\textsuperscript{+}~\citep{zhao2022}) without hyper-parameter tuning and while \emph{consuming 37.60\% less memory (2.39GB vs 3.83GB)}. 

On Memory Scalability, \autoref{subsec:ELENEMemScalability} shows that \textsc{Elene-L} can be used on $d$-regular graphs with more than $10^4$ nodes where $d_\texttt{max} \in \{6, 12, 18\}$, validating the expected memory costs from~\autoref{subsec:ELENEAlgorithmicAnalysis} and outperforming the memory consumption of strong GIN-AK and GIN-AK\textsuperscript{+} baselines and recent methods like SPEN~\citep{mitton2023subgraph}.

\section{Conclusions}\label{sec:EleneConclusions}

We presented \textsc{Elene}, a principled edge-level ego-network encoding capturing the structural signals sufficient to distinguish 3-WL equivalent \texttt{SRG}s. We proposed two variants---\textsc{Elene} and \textsc{Elene-L}---and showed that Node-Centric and Edge-Centric representations exhibit different expressive power. To position our findings, we formally drew connections between \textsc{Elene} and recent Sub-Graph GNNs, Graph Transformers, and Shortest Path Neural Networks. 

Empirically, we evaluated our methods on 10 different tasks, where the sparse \textsc{Elene} vectors improve performance on structural expressivity tasks. Our learnable Edge-Centric \textsc{Elene-L} variant boosts MP-GNN expressivity to reach 100\% accuracy on the challenging SR25 dataset, while its Node-Centric counterpart improves over a strong baseline on the $h$-Proximity task and matches state-of-the-art results in several real-world graphs. Finally, we found our methods provide a trade-off between memory usage and structural expressivity, improving memory usage with up to 18.1$\times$ lower memory costs compared to sub-graph GNN baselines.

\subsubsection*{Broader Impact Statement}

Our main contributions are (a) a novel family of edge-aware features that can be used alone or during learning in MP-GNNs, with (b) a formal analysis of their expressivity that shows they can distinguish challenging \texttt{SRG}s, and (c) experimental results matching state-of-the-art learning models with favorable memory costs. 

We do not foresee ethical implications of our theoretical findings. Our experimental results are competitive with state-of-the-art methods at a lower memory footprint, which may help solve tasks with limited memory budgets.

\if\cameraReady1


\subsubsection*{Acknowledgements}

This work is part of the action CNS2022-136178 financed by MCIN/AEI/10.13039/501100011033 and by the EU Next Generation EU/PRTR. This work has been co-funded by MCIN/AEI/10.13039/501100011033 under the Maria de Maeztu Units of Excellence Programme (CEX2021-001195-M).
\fi

\bibliography{references}
\bibliographystyle{tmlr}

\clearpage

\appendix

\section{ELENE through BFS}
\label{sec:EleneBFS}

This appendix showcases a BFS implementation of the \textsc{Elene} Encoding that spans edges until reaching the maximum encoding depth $k$ given a node $v$ in $V$. As noted in~\autoref{subsec:ELENEAlgorithmicAnalysis}, this implementation can be trivially parallelized over $p$ processors as the encoding of each $v \in V$ is independent of other nodes.

\begin{algorithm}[hbt!]
\caption{\textsc{Elene} Node Encoding using BFS.}\label{alg:ELENEBFSEncoding}
\begin{algorithmic}[1]
    \Require{$G = (V, E), v \in V, k: \mathbb{N}$}
    \State $\texttt{distances} := \{v: 0\}$ \Comment{Mapping of nodes to their distance to $n$, i.e. $l_G(u, v)$.}
    \State $\texttt{r\_degrees} := \{v: (0, 0, 0)\}$ \Comment{Mapping of nodes to their relative degrees, i.e. $d_{\mathcal{S}}^{(p)}(u|v)$.}
    \For{$(\texttt{src}, \texttt{dst}) \text{ in } G.\texttt{bfs\_edges}(n, \texttt{max\_depth}=k)$}
        \If {$\texttt{src} \notin \texttt{distances}$} \Comment{Invariant: only one node can be unknown / not in $\texttt{distances}$.}
            \State{$\texttt{dst}, \texttt{src} := \texttt{src}, \texttt{dst}$}
        \EndIf
        \If {$\texttt{dst} \notin \texttt{distances}$} \Comment{$\texttt{dst}$ is unknown, so its distance is one-hop after $\texttt{src}$'s.}
            \State $\texttt{distances}[\texttt{dst}] := \texttt{distances}[\texttt{src}]$
        \EndIf
        \State $\texttt{dist\_delta} := \texttt{distances}[\texttt{dst}] - \texttt{distances}[\texttt{src}]$ \Comment{Compute the distance delta in \{-1, 0, 1\}.} \\

        \LineComment{Access the relative degree counts of each node.}
        \State $\texttt{src\_deg} := \texttt{r\_degrees}.\texttt{get} (\texttt{src}, [0, 0, 0])$ 
        \State $\texttt{dst\_deg} := \texttt{r\_degrees}.\texttt{get}(\texttt{dst}, [0, 0, 0])$ \\ 
        
        \LineComment{Increment degree counts for each node in their respective `direction'.}
        \State $\texttt{src\_deg}[\texttt{dist\_delta} + 1]\texttt{++}$ \Comment{The indexing maps \{-1, 0, 1\} deltas into \{0, 1, 2\} vector indexes.}  
        \State $\texttt{dst\_deg}[1 - \texttt{dist\_delta}]\texttt{++}$  \\ 
        
        \LineComment{Update the relative degrees of $\texttt{src}$ and $\texttt{dst}$.}
        \State $\texttt{r\_degrees}[\texttt{src}] := \texttt{src\_deg}$ 
        \State $\texttt{r\_degrees}[\texttt{dst}] := \texttt{dst\_deg}$ 
    \EndFor \\
    \LineComment{For each $u \in \mathcal{V}^{k}_v$, compute~\autoref{eq:ELENEEncRaw} quadruplets, count their frequencies and return the mapping.}
    \State $\texttt{mapping} := \{\}$
    \For{$u \in \mathcal{V}^{k}_v$}
        \State $\texttt{quadruplet} := (\texttt{distances}[u], \texttt{r\_degrees}[u][0], \texttt{r\_degrees}[u].\texttt{sum}(), \texttt{r\_degrees}[u][2])$
        \State $\texttt{mapping}[\texttt{quadruplet}]\texttt{++}$
    \EndFor
    \Ensure{$\texttt{mapping}$}
\end{algorithmic}
\end{algorithm}

\section{Benchmark Details}
\label{sec:EleneBenchmarkDetails}

In this appendix, we provide an overview of the benchmark we execute when evaluating \textsc{Elene}, including the variants of models we test, and descriptions of our code and compute environment. We also summarize the datasets we use in~\autoref{sec:EleneDatasetsOverview}.

\textbf{Benchmark Configuration.} We build on top of the implementation from~\cite{zhao2022}, introducing explicit ego-network attributes on their evaluation framework for consistency. 

All \textsc{Elene} results are reported by extending the node and edge attributes as input into a GIN~\cite{xu2018how} extended to support edge-level features when available~\cite{hu2020pretraining}. In all experiments, we evaluate \textsc{Elene-L} on top of GINs with edge extensions~\cite{hu2020pretraining}. 

For all explicit ego-network attribute methods, we summarize the available hyper-parameters in~\autoref{tab:EgoNetAttrHyperParams}. For the implementation of \textsc{Elene-L}, we observed unstable training when using the sum pooling function during early stages of development. We found that training was stable using masked \texttt{Mean} pooling where the $n$ node messages (or $m$ for edge messages) in the ego-network sub-graph are averaged considering a binary mask for neighbors of the root node at a distance $k$ or less. All our results are reported using \texttt{Mean} pooling, including our results on SR25, suggesting that this decision does not adversely impact the model expressivity expected from~\autoref{sec:EleneExpressivity}. The resulting implementation of~\autoref{eq:ELENE-L-NodeMessage} is: 
\begin{align*}
    \Phi_{\texttt{ND}}^{t}(v) = \Phi_{\texttt{out}}\Bigg(\textbf{x}_v^{t} \biggconcat \sum_{u}^{\mathcal{V}^{k}_v} \frac{\Phi_{\texttt{nd}}^{t}(u|v)}{\texttt{size}(\mathcal{V}^{k}_v)} \biggconcat \sum_{\langle u, w\rangle}^{\mathcal{E}^{k}_v} \frac{\Phi_{\texttt{ed}}^{t}(u, w|v)}{\texttt{size}(\mathcal{E}^{k}_v)}\Bigg).
\end{align*}
We use an analogous implementation for~\autoref{eq:ELENE-L-Edge}. Additionally, in our experimental benchmark we choose to implement \textsc{Elene-L} (\textbf{ND}) without the $\Phi_{\texttt{ed}}^{t}$ term so that it more closely follows the node-centric \textsc{Elene} encodings of~\autoref{eq:ELENEEncRaw}, reducing memory costs. We describe the hyper-parameters implemented to control our models in~\autoref{tab:EgoNetAttrHyperParams}. 

\begin{table}[!ht]
\centering
\caption[\textsc{Elene} Hyper-parameters.]{Hyper-parameters controlling the behaviour of explicit ego-network attribute encodings. \textsc{Elene} only relies on $k$, while \textsc{Elene-L} has 5 additional configurable settings.}
\label{tab:EgoNetAttrHyperParams}
\small
\begin{tabular}{@{}cccc@{}}
\toprule
\textbf{Parameter}              & \textbf{\textsc{Elene}}                                                                      & \textbf{\textsc{Elene-L}}                                                                                                         \\ \midrule
\textbf{Depth of Ego-Net ($k$)} & \{0, 1, 2\}                                                                                  & \{0, 1, 2, 3\}                                                                                                                  \\ \midrule
\textbf{Embedding Type}         & Sparse                                                                                       & Dense, learned                                                                                               \\ \midrule
\textbf{Representation}         & Node-only                                                                                    & Node-centric (\textbf{ND}), Edge-centric (\textbf{ED})                                                                                                 \\ \midrule
\textbf{Max. Encoded Degree}    & \begin{tabular}[c]{@{}c@{}}Set to $d_\texttt{max}$\\ from the training dataset.\end{tabular} & \begin{tabular}[c]{@{}c@{}}Set to $d_\texttt{max}$\\ from the training dataset or 0 \\ (ignore degree info).\end{tabular}       \\ \midrule
\textbf{Max. Encoded Distance}  & Equal to $k$                                                                                 & \begin{tabular}[c]{@{}c@{}}Set to $k$. \\ Can be modified to control \\ the sub-graph mean norm. factor.\end{tabular}           \\ \bottomrule
\end{tabular}
\end{table}

\textbf{Tested Models.} On the Expressivity tasks, ZINC and MolHIV, we evaluate all learnable variants (\textbf{ND} and \textbf{ED}), while on the remaining classification/regression benchmarks we only consider \textbf{(ND)} models due to reduced memory costs and limited computational bandwidth. Furthermore, in all \textsc{Elene-L} setups we only test a reduced number of hyper-parameters due to computational constraints, unless specified otherwise, only evaluating different values of the maximum sub-graph distance to embed. We describe the hyper-parameters and modeling choices in detail in~\autoref{sec:EleneDetTrainingSummary}.

\subsection{Dataset Details}\label{sec:EleneDatasetsOverview}

We summarize the key aspects of the datasets we use to evaluate our proposed methods in~\autoref{sec:EleneExperiments}. \autoref{tab:DatasetStats} contains an overview of each benchmark and dataset, the objective being addressed, and high-level dataset statistics---namely number of graphs, average number of nodes ($n$) and edges ($m$) per graph.

\begin{table}
\centering
\caption{Dataset statistics.}
\label{tab:DatasetStats}
\scriptsize
\begin{tabular}{@{}cclrrrr@{}}
\toprule
\textbf{Benchmark}                                                                    & \textbf{Dataset}       & \multicolumn{1}{c}{\textbf{Objective}} & \multicolumn{1}{c}{\textbf{Tasks}} & \multicolumn{1}{c}{\textbf{\begin{tabular}[c]{@{}c@{}}Nr. of Graphs\\ (Train / Valid / Test)\end{tabular}}} & \multicolumn{1}{c}{\textbf{Avg. $n$}} & \multicolumn{1}{c}{\textbf{Avg. $m$}} \\ \midrule
\multicolumn{1}{l}{\multirow{4}{*}{\textbf{Expressivity}}}                            & \textbf{EXP}           & Distinguish 1-WL Equiv. graphs         & 2                                  & 1200                                                                                                   & 44.4                                  & 110.2                                 \\
\multicolumn{1}{l}{}                                                                  & \textbf{SR25}          & Distinguish 3-WL Equiv. graphs         & 15                                 & 15                                                                                                     & 25                                    & 300                                   \\
\multicolumn{1}{l}{}                                                                  & \textbf{CountingSub.}  & Count graph substructures              & 4                                  & 1500 / 1000 / 2500                                                                                     & 18.8                                  & 62.6                                  \\
\multicolumn{1}{l}{}                                                                  & \textbf{GraphProp.}    & Regress graph properties               & 3                                  & 5120 / 640 / 1280                                                                                      & 19.5                                  & 101.1                                 \\ \midrule
\multirow{5}{*}{\textbf{\begin{tabular}[c]{@{}c@{}}Real World\\ Graphs\end{tabular}}} & \textbf{ZINC-12K}      & Molecular prop. regression             & 1                                  & 10000 / 1000 / 1000                                                                                    & 23.1                                  & 49.8                                  \\
                                                                                      & \textbf{CIFAR10}       & Multi-class class.                     & 10                                 & 45000 / 5000 / 10000                                                                                   & 117.6                                 & 1129.8                                \\
                                                                                      & \textbf{PATTERN}       & Recognize subgraphs                    & 2                                  & 10000 / 2000 / 2000                                                                                    & 118.9                                 & 6079.8                                \\
                                                                                      & \textbf{MolHIV}        & Binary class.                          & 1                                  & 32901 / 4113 / 4113                                                                                    & 25.5                                  & 54.1                                  \\
                                                                                      & \textbf{MolPCBA}       & Multi-label binary class.              & 128                                & 350343 / 43793 / 43793                                                                                 & 25.6                                  & 55.4                                  \\ \midrule
\textbf{Proximity}                                                                    & \textbf{$h$-Proximity} & Binary classification                  & 4                                  & 9000                                                                                                   & 117.14                                & 1484.82                               \\ \bottomrule
\end{tabular}
\end{table}

\subsection{Detailed Experimental Summary}\label{sec:EleneDetTrainingSummary}

In this section, we summarize our experimental setup and training procedure, describing the hyper-parameters that we consider in each setting. For all the experiments described, we evaluate the \textsc{Elene} encodings by concatenating them with the node feature vectors and as part of the edge features when available, using the element-wise product following the same approach as \textsc{Igel}. 

\textbf{Expressivity.} See \texttt{expressivityDatasets.sh} for details.\\

\emph{---EXP and SR25.} We evaluate \textsc{Elene} on GIN and GIN-AK\textsuperscript{+} for both data sets with $k \in \{0, 1, 2\}$. For \textsc{Elene-L}, we evaluate all model variants for $k \in \{0, 1, 2\}$ with 8-dim embeddings for EXP and 32-dim embeddings for SR25. All models use $L = 4$ for EXP and $L = 2$ for SR25.

\emph{---Counting Sub. and Graph Prop.} We evaluate \textsc{Elene} on GIN and GIN-AK\textsuperscript{+} for both data sets with $k \in \{0, 1, 2\}$. For \textsc{Elene-L}, we evaluate all model variants for $k \in \{0, 1, 2\}$ with 16-dim embeddings. On the GraphProp dataset, we additionally try $k = 3$ after noticing expected positive results during early evaluation---as larger values of $k$ enable the model to capture long-range dependencies. All models use $L = 3$ for Counting Sub. and $L = 6$ for Graph Prop.\\

\textbf{Real World Graphs.} See \texttt{benchmarkDatasets.sh} for details.\\

\emph{---ZINC and MolHIV.} We evaluate \textsc{Elene} on GIN and GIN-AK\textsuperscript{+} for both data sets with $k \in \{0, 1, 2\}$. For \textsc{Elene-L}, we evaluate all model variants for $k \in \{0, 1, 2, 3\}$ with 32-dim embeddings. All models use $L = 6$ for ZINC and $L = 2$ for MolHIV.

\emph{---PATTERN.} We evaluate \textsc{Elene} on GIN with $k \in \{0, 1, 2\}$ and on GIN-AK\textsuperscript{+} $k \in \{0, 1\}$. For \textsc{Elene-L}, we evaluate all model variants for $k \in \{0, 1, 2, 3\}$ with 64-dim embeddings. Suspecting that degree information may not play a salient role in the sub-graph patterns, we also evaluate the setting without degree information but found this slightly degrades performance compared to models that encode degree attributes. All models use $L = 6$.

\emph{---CIFAR10 and MolPCBA.} We evaluate node-centric \textsc{Elene-L} (\textbf{ND}) with $k \in \{1, 2, 3\}$. Due to computational constraints, we prioritize training with $k=3$ given promising results in other tasks. On CIFAR, we discard uninformative degree information as graphs are $k=8$ nearest neighbor graphs containing super-pixel information. We do not modify the architecture or hyper-parameters of the best-performing GNN-AK\textsuperscript{+} model reported in~\cite{zhao2022}. Our results report average and standard deviations of the evaluation metric---Accuracy for CIFAR10, Average Precision (AP) for MolPCBA---collected from 3 independent runs.\\

\textbf{$h$-Proximity.} See \texttt{proximityResults.sh} for details.\\

We evaluate node-centric \textsc{Elene-L} without degree information, which matches the configuration of SPNNs. We do not tune any hyper-parameters, evaluating \textsc{Elene-L} with $k \in \{3, 5\}$ fixing $L = 3$ and using 32-dim. embeddings. The first layer in the network embeds the color information, for which the model needs to appropriately learn to ignore irrelevant colors. Due to constrained computational resources, we only evaluate two maximum distances for \textsc{Elene-L}, 3 and 5, sharing embedding weights and introducing ego-network signals before each of the 3 GIN layers. We share \textsc{Elene-L} embedding matrices across all layers and set the maximum encoded degree $d = 0$ to only encode distance information. We report the mean and standard deviation of the binary classification accuracy computed across 10-folds over the dataset, following~\cite{abboud2022spnn}.

\textbf{Memory Scalability.} 

We provide additional results from the memory scalability experiments in~\autoref{subsec:ELENEMemScalability}, reporting memory consumption performance when $d_\texttt{max} = 6$ and $d_\texttt{max} = 18$ in~\autoref{fig:ELENEScalabilityPlotDeg6} and~\autoref{fig:ELENEScalabilityPlotDeg18}.

\begin{figure}[!ht]
  \centering
  \includegraphics[width=1.0\linewidth]{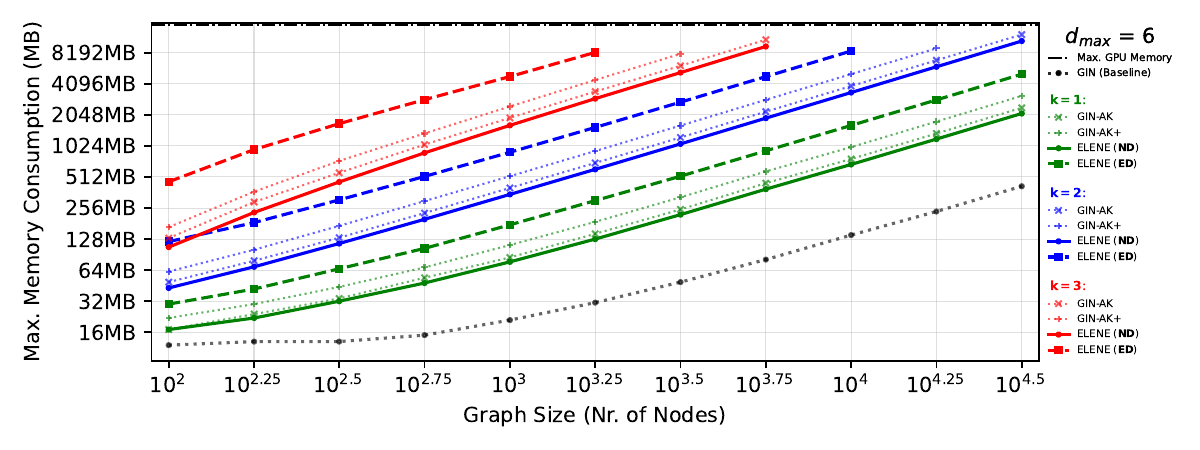}
  \caption[Memory scalability analysis of \textsc{Elene} with $d_\texttt{max} = 6$.]{Memory scalability analysis of \textsc{Elene} with $d_\texttt{max} = 6$, produced as~\autoref{fig:ELENEScalabilityPlot}. We include GIN (dotted line) and maximum GPU memory (dash-and-dotted line) as indicative lower and upper memory bounds. \textsc{Elene-L} (\textbf{ND}, full lines) outperforms both GIN-AK, GIN-AK$^{+}$ (dotted lines) and \mbox{\textsc{Elene-L}} (\textbf{ED}, dashed lines). Additionally, \textsc{Elene-L} (\textbf{ND}) can encode all $d$-regular graphs in the benchmark when $k=1$. As expected, memory consumption increases linearly with the number of nodes as $d_\texttt{max}$ is kept fixed.}
  \label{fig:ELENEScalabilityPlotDeg6}
\end{figure}

\begin{figure}[!ht]
  \centering
  \includegraphics[width=1.0\linewidth]{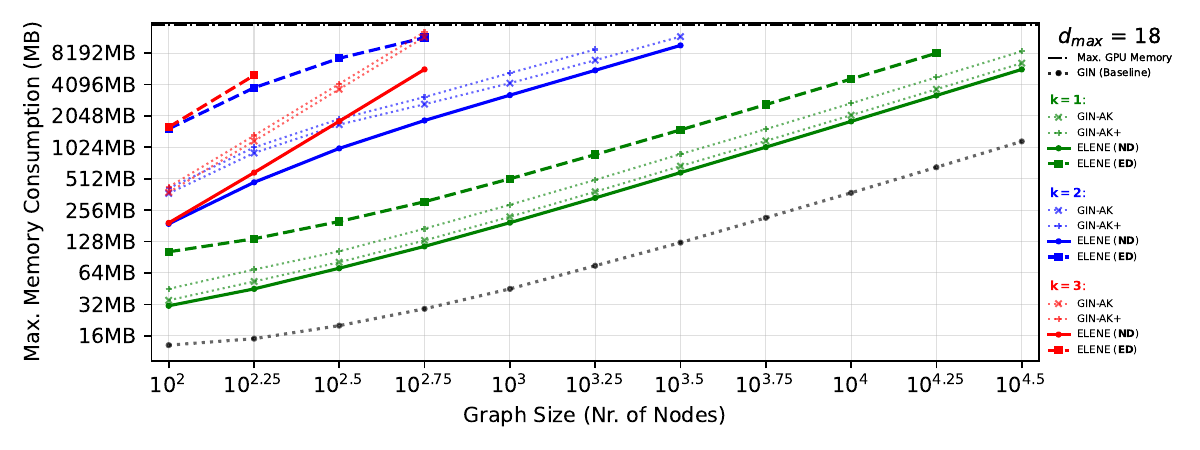}
  \caption{Memory scalability analysis of \textsc{Elene} with $d_\texttt{max} = 18$. See caption of Figure~\ref{fig:ELENEScalabilityPlotDeg6} for details.}
  \label{fig:ELENEScalabilityPlotDeg18}
\end{figure}

\textbf{Graph Density and Scalability.} 

We also provide extended memory scalability results by studying the impact of the density of the graph. We ran additional experiments on graphs where $N=1000$, and evaluated the memory consumption as density increases as a function of the degree of nodes in the graph. We perform the same experiment in two settings: one where the degree distribution is regular (i.e., the graphs are $d$-regular, studying different values of $d$), and one where the distribution of degrees is irregular. In the irregular case, we study the case in which all nodes have \emph{at least} degree $d$, but may have higher connectivity following the Barabási-Albert preferential attachment model~\citep{barabasi-albert1999prefattach}.

\emph{--- Memory Consumption on Regular Density Graphs.} In~\autoref{fig:ELENEDegreeScalabilityPlot}, we compare GIN, GIN-AK, GIN-AK$^{+}$ and ELENE variants on  at depths $k \in \{1, 2, 3\}$. Note that we could not include SPEN, as described in~\autoref{subsec:ELENEMemScalability}, due to reaching the maximum memory thresholds at $d_{\max} = 8$.  

\begin{figure}[!ht]
  \centering
  \includegraphics[width=1.0\linewidth]{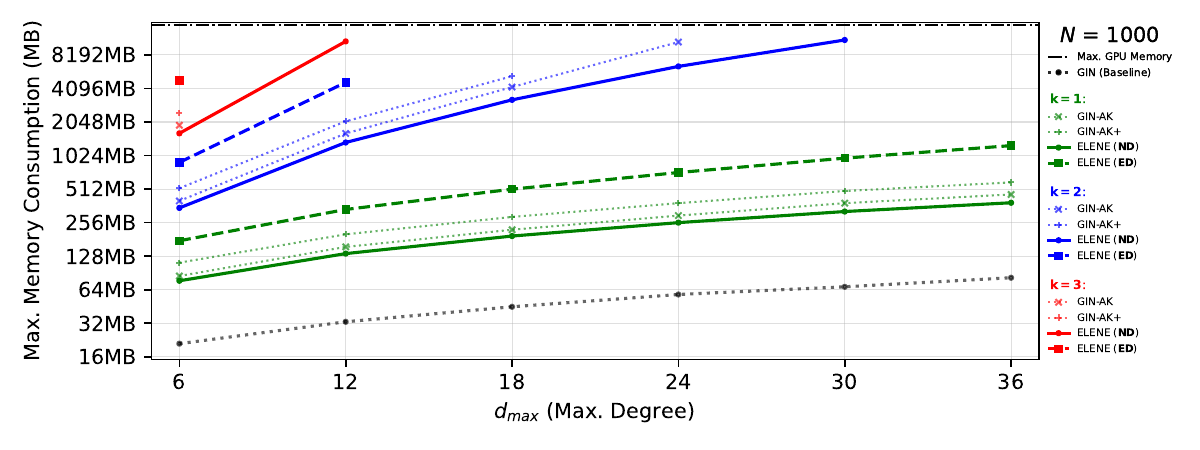}
  \caption{Memory scalability analysis of \textsc{Elene} with $N = 1000$ in function of increasing values of $d_{\max}$. See caption of Figure~\ref{fig:ELENEScalabilityPlotDeg6} for details.}
  \label{fig:ELENEDegreeScalabilityPlot}
\end{figure}

\emph{--- Memory Consumption on Irregular Density Graphs.} In~\autoref{fig:ELENEIrregularDegreeScalabilityPlot}, we repeat the analysis from~\autoref{fig:ELENEDegreeScalabilityPlot} on graphs generated following the preferential attachment model where each node has at least $m$ edges. We find that \textsc{Elene-L} (\textbf{ND}, full lines) outperforms both GIN-AK, GIN-AK$^{+}$ (dotted lines) and \mbox{\textsc{Elene-L}} (\textbf{ED}, dashed lines), matching~\autoref{subsec:ELENEMemScalability} and results on regular connectivity patterns shown in~\autoref{fig:ELENEDegreeScalabilityPlot}.

\begin{figure}[!ht]
  \centering
  \includegraphics[width=1.0\linewidth]{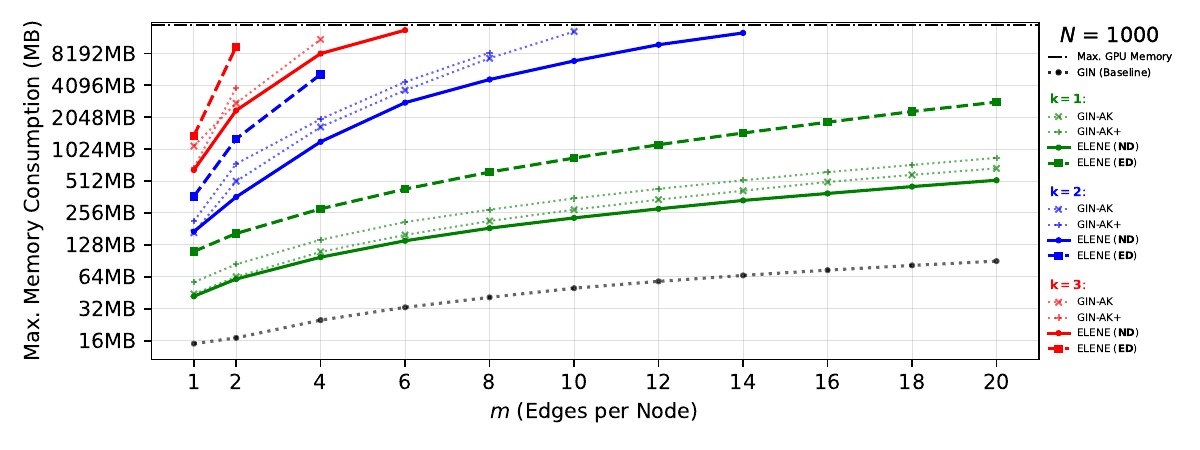}
  \caption{Memory scalability analysis of \textsc{Elene} with $N = 1000$ in function of increasing values of $d_{\min}$ on random Barabási-Albert graphs. See caption of Figure~\ref{fig:ELENEScalabilityPlotDeg6} for details.} \label{fig:ELENEIrregularDegreeScalabilityPlot}
\end{figure}

\subsection{Best Hyper-parameters}

In this section, we provide an overview of the best hyperparameters we find for \textsc{Elene} and \textsc{Elene-L}. For simplicity, we only report the best performing model, i.e., not distinguishing between enhancing a GIN or a GIN-AK\textsuperscript{+} model. We group together hyper-parameters set at the dataset level (e.g. for the Counting Substructures or $h$-Proximity datasets), and report the hyper-parameters corresponding to the best models reported in~\autoref{sec:EleneExperiments}. In our summary, we include the best-performing ego-network feature with (a) the ego-network depth --- $k$, (b) the number of layers --- $L$, and (c) the embedding layer size for \textsc{Elene-L}. 

We summarise our findings in~\autoref{tab:BestHyperparams}. For datasets and tasks where multiple models achieve comparable performance (i.e. same performance metric with the reported significant digits), we break ties by reporting the model with the lowest memory footprint across the tie.

\begin{table}[!t]
\centering
\caption[Best Hyper-parameters on \textsc{Elene}.]{Best hyper-parameters for the for the top performing models after introducing explicit ego-network attributes as shown in~\autoref{sec:EleneExperiments}. We report the hyper-parameters corresponding to the best-performing model by looking at the objective performance metric on each dataset, and resolve ties by selecting the model with the lowest memory footprint.}
\label{tab:BestHyperparams}
\scriptsize
\begin{tabular}{@{}ccccrrrc@{}}
\toprule
\textbf{Benchmark}                                                                    & \textbf{Dataset}                                                                  & \textbf{Task}        & \textbf{Ego-Net Feature}                                                                     & \multicolumn{1}{c}{\textbf{$k$-hops}} & \multicolumn{1}{c}{\textbf{$L$-Layers}} & \multicolumn{1}{c}{\textbf{\begin{tabular}[c]{@{}c@{}}Emb. Size\\ (ELENE-L)\end{tabular}}}               \\ \midrule
\multirow{9}{*}{\textbf{Expr.}}                                                       & \multicolumn{2}{c}{\textbf{EXP}}                                                                         & \textbf{\begin{tabular}[c]{@{}c@{}}All Ego-Net Features\\ Reach 100\% Accuracy\end{tabular}} & 1                                     & 4                                       & 32                                                                                                  \\
                                                                                      & \multicolumn{2}{c}{\textbf{SR25}}                                                                        & \textbf{GIN +ELENE-L (ED)}                                                    & 1                                     & 2                                       & 32                                                                                                  \\ \cmidrule(l){2-8} 
                                                                                      & \multirow{4}{*}{\textbf{\begin{tabular}[c]{@{}c@{}}Counting\\ Sub.\end{tabular}}} & \textbf{Triangle}    & \textbf{GIN-AK\textsuperscript{+}+ELENE}                                            & \multirow{4}{*}{2}                    & \multirow{4}{*}{3}                      & \multirow{4}{*}{16}                                                                                 \\
                                                                                      &                                                                                   & \textbf{Tailed Tri.} & \textbf{GIN-AK\textsuperscript{+}+ELENE}                                            &                                       &                                         &                                                                                                     &                                                                                                   \\
                                                                                      &                                                                                   & \textbf{Star}        & \textbf{GIN +ELENE-L (ND)}                                                         &                                       &                                         &                                                                                                     &                                                                                                   \\
                                                                                      &                                                                                   & \textbf{4-Cycle}     & \textbf{GIN-AK\textsuperscript{+}+ELENE}                                            &                                       &                                         &                                                                                                     &                                                                                                   \\ \cmidrule(l){2-8} 
                                                                                      & \multirow{3}{*}{\textbf{\begin{tabular}[c]{@{}c@{}}Graph\\ Prop.\end{tabular}}}   & \textbf{IsConn.} & \textbf{GIN-AK\textsuperscript{+}+ELENE-L (ND)}                                      & \multirow{3}{*}{3}                    & \multirow{3}{*}{6}                      & \multirow{3}{*}{16}                                                                                 \\
                                                                                      &                                                                                   & \textbf{Diameter}    & \textbf{GIN-AK\textsuperscript{+}+ELENE-L (ND)}                                      &                                       &                                         &                                                                                                     \\
                                                                                      &                                                                                   & \textbf{Radius}      & \textbf{GIN-AK\textsuperscript{+}+ELENE-L (ND)}                                     &                                       &                                         &                                                                                                     \\ \midrule
\multirow{5}{*}{\textbf{\begin{tabular}[c]{@{}c@{}}Real World\\ Graphs\end{tabular}}} & \multicolumn{2}{c}{\textbf{ZINC-12K}}                                                                    & \textbf{GIN+ELENE-L (ND)}                                                           & 3                                     & 6                                       & 32                                                                                                  \\
                                                                                      & \multicolumn{2}{c}{\textbf{CIFAR10}}                                                                     & \textbf{GIN+ELENE-L (ND)}                                                           & 2                                     & 4                                       & 64                                                                                                  \\
                                                                                      & \multicolumn{2}{c}{\textbf{PATTERN}}                                                                     & \textbf{GIN+ELENE-L (ND)}                                                           & 2                                     & 6                                       & 64                                                                                                  \\
                                                                                      & \multicolumn{2}{c}{\textbf{MolHIV}}                                                                      & \textbf{GIN+ELENE}                                                                   & 2                                     & 2                                       & N/A                                                                                  \\
                                                                                      & \multicolumn{2}{c}{\textbf{MolPCBA}}                                                                     & \textbf{GIN+ELENE-L (ND)}                                                           & 3                                     & 5                                       & 64                                                                                                  \\ \midrule
\multirow{4}{*}{\textbf{Proximity}}                                                   & \multirow{4}{*}{\textbf{$h$-Proximity}}                                           & \textbf{$h = 3$}     & \multirow{4}{*}{\textbf{GIN + ELENE-L (ND)}}                                        & 3                                     & \multirow{4}{*}{3}                      & \multirow{4}{*}{32}                                                                                 \\
                                                                                      &                                                                                   & $h = 5$              &                                                                                              & 5                                     &                                         &                                                                                                     &                                                                                                   \\
                                                                                      &                                                                                   & $h = 8$              &                                                                                              & 5                                     &                                         &                                                                                                     &                                                                                                   \\
                                                                                      &                                                                                   & $h = 10$             &                                                                                              & 5                                     &                                         &                                                                                                     &                                                                                                   \\ \bottomrule
\end{tabular}
\end{table}

\section{ELENE is Permutation Equivariant and Invariant}\label{sec:ELENEAppPermInvariant}

We show that \textsc{Elene} is permutation equivariant at the graph level, and permutation invariant at the node level. As all operations that \textsc{Elene} requires are permutation equivariant at the graph level, and permutation invariant at the node level, the same holds for \textsc{Elene} representations.

\begin{lemma}\label{lemma:ELENEPermInvariant}
Given any $v \in V$ for $G = (V, E)$ and given a permuted graph $G' = (V', E')$ of $G$ produced by a permutation of node labels $\pi: V \rightarrow V'$ such that $\forall v \in V \Leftrightarrow \pi(v) \in V'$, $\forall (u, v) \in E \Leftrightarrow (\pi(u), \pi(v)) \in E'$.

All $\textsc{Elene}$ representations are permutation equivariant at the graph level:

$$\pi (\ldblbrace e_{v_1}^{k},\dots ,e_{v_n}^{k} \rdblbrace) = \ldblbrace e_{\pi(v_1)}^{k},\dots ,e_{\pi(v_n)}^{k}\rdblbrace.$$

Furthermore, $\textsc{Elene}$ representations are permutation invariant at the node level:

  $$e_{v}^{k} = e_{\pi(v)}^{k} , \forall v \in V, \pi(v) \in V'.$$
\end{lemma}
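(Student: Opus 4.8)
The plan is to reduce both claims to the single fact that $\pi$, restricted to any $k$-depth ego-network, is a graph isomorphism onto the corresponding ego-network in $G'$, and then to observe that every ingredient of the \textsc{Elene} quadruplet in \autoref{eq:ELENEEncRaw}---the within-sub-graph distance $l_{\mathcal{S}}$, the absolute degree $d_{\mathcal{S}}$, and the relative degrees $d^{(p)}_{\mathcal{S}}$---is invariant under graph isomorphism. Node-level invariance then follows by matching quadruplets node-by-node, and graph-level equivariance is an immediate corollary.

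First I would show $\pi(\mathcal{V}^{k}_v) = \mathcal{V}^{k}_{\pi(v)}$ and $\pi(\mathcal{E}^{k}_v) = \mathcal{E}^{k}_{\pi(v)}$. Since $\pi$ preserves adjacency in both directions, it preserves walks and hence shortest-path lengths, $l_G(u,w) = l_{G'}(\pi(u),\pi(w))$ for all $u,w \in V$; in particular $u \in \mathcal{V}^{k}_v \iff l_G(u,v) \le k \iff \pi(u) \in \mathcal{V}^{k}_{\pi(v)}$, and the edge statement follows because $\mathcal{S}^{k}_v$ is the induced sub-graph on $\mathcal{V}^{k}_v$. Consequently $\pi|_{\mathcal{V}^{k}_v}$ is an isomorphism $\mathcal{S}^{k}_v \to \mathcal{S}^{k}_{\pi(v)}$, so it also preserves distances computed \emph{inside} the sub-graph: $l_{\mathcal{S}^{k}_v}(u,w) = l_{\mathcal{S}^{k}_{\pi(v)}}(\pi(u),\pi(w))$.

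Next I would check each component of the quadruplet of a fixed $u \in \mathcal{V}^{k}_v$. The distance component $l_{\mathcal{S}}(u,v)$ is preserved by the previous paragraph, and the absolute degree $d_{\mathcal{S}}(u)$ is preserved because $\pi|_{\mathcal{V}^{k}_v}$ is a graph isomorphism. For the relative degrees, $d^{(p)}_{\mathcal{S}}(u|v)$ counts the edges $(u,w) \in \mathcal{E}^{k}_v$ with $l_{\mathcal{S}}(v,w) = l_{\mathcal{S}}(u,v)+p$; the map $(u,w) \mapsto (\pi(u),\pi(w))$ is a bijection $\mathcal{E}^{k}_v \to \mathcal{E}^{k}_{\pi(v)}$ preserving both $l_{\mathcal{S}}(v,w)$ and $l_{\mathcal{S}}(u,v)$, so it restricts to a bijection between the two edge sets being counted, giving $d^{(p)}_{\mathcal{S}}(u|v) = d^{(p)}_{\mathcal{S}}(\pi(u)|\pi(v))$. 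Hence the quadruplet attached to $u$ in $e^{k}_v$ equals the quadruplet attached to $\pi(u)$ in $e^{k}_{\pi(v)}$, and since $\pi$ is a bijection $\mathcal{V}^{k}_v \to \mathcal{V}^{k}_{\pi(v)}$ the two multi-sets coincide: $e^{k}_v = e^{k}_{\pi(v)}$, which is node-level invariance. Graph-level equivariance is then immediate: relabelling $V$ by $\pi$ sends the indexed family $(e^{k}_v)_{v \in V}$ to $(e^{k}_{\pi(v)})_{v \in V}$, and the two agree value-by-value, so $\pi(\ldblbrace e_{v_1}^{k},\dots,e_{v_n}^{k}\rdblbrace) = \ldblbrace e_{\pi(v_1)}^{k},\dots,e_{\pi(v_n)}^{k}\rdblbrace$ as multi-sets.

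There is no serious obstacle here; the only point requiring a little care is the distinction between distances measured in $G$ and distances measured inside the induced ego-network $\mathcal{S}^{k}_v$---the argument must invoke isomorphism-invariance at the level of the sub-graph (which follows once $\pi$ is shown to map $\mathcal{V}^{k}_v$ onto $\mathcal{V}^{k}_{\pi(v)}$), rather than directly using $l_G$. An analogous and strictly easier argument handles the Edge-Centric encoding $e^{k}_{\langle u,v\rangle}$, since $\mathcal{S}^{k}_{\langle u,v\rangle}$ is built from the same intersections of ego-networks, all of which $\pi$ preserves.
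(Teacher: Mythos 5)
Your proof is correct and follows essentially the same route as the paper's, which likewise reduces the claim to the observation that $l(\cdot,\cdot)$ and $d^{(p)}(\cdot|\cdot)$ are isomorphism-invariant quantities unaffected by relabelling. You simply spell out the details the paper leaves implicit --- in particular that $\pi$ restricts to an isomorphism $\mathcal{S}^{k}_v \to \mathcal{S}^{k}_{\pi(v)}$, so that distances measured \emph{inside} the induced ego-network (not just in $G$) are preserved --- which is a worthwhile clarification but not a different argument.
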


\begin{proof}\label{proof:ELENEPermEquivariant}
Note that $e_{v}^{k}$ in~\autoref{eq:ELENEEncRaw} can be expressed in terms of $d_{G}^{(p)}(u|v)$ and $l_{G}(u, v)$. Both $l_G(\cdot,\cdot)$ and $d_G^{(p)}(\cdot|\cdot)$ are permutation invariant functions at the node level and equivariant at the graph level, as they rely on the distance between nodes, which will not change when permutation $\pi(\cdot)$ is applied. Thus, $\textsc{Elene}$ representations are permutation equivariant at the graph level, and permutation invariant at the node level.
\end{proof}

\end{document}